\definecolor{citecolor}{RGB}{0,180,0}
\definecolor{linkcolor}{RGB}{180,0,0}
\definecolor{urlcolor}{RGB}{0,0,180}
\newtheorem{theorem}{Theorem}[section]
\newtheorem{corollary}{Corollary}[theorem]
\newtheorem{lemma}{Lemma}[theorem]
\newcommand{\trans}{^\top}
\newcommand{\brackets}[1]{\left(#1\right)}
\newcommand{\tr}{\mathrm{Tr}}
\newcommand{\hsic}{\mathrm{HSIC}}
\newcommand{\mmd}{\mathrm{MMD}}
\newcommand{\expect}{\mathbb{E}}
\newcommand{\var}{\mathbb{V}\mathrm{ar}}
\newcommand{\ind}{\mathbb{I}}
\newcommand{\dotprod}[2]{\left\langle #1,#2\right\rangle}
\newcommand{\RR}{\mathbb{R}}
\newcommand{\bb}[1]{\mathbf{#1}}
\DeclareMathOperator{\E}{\mathbb{E}}
\definecolor{LightBlue}{rgb}{0.94,0.97,1}
\title{Self-Supervised Learning\\with Kernel Dependence Maximization}
\author{%
  Yazhe Li\thanks{These authors contributed equally.}\\
  DeepMind and Gatsby Unit, UCL\\
  \texttt{yazhe@google.com}
  \And 
  Roman Pogodin$^*$\\  %
  Gatsby Unit, UCL\\
  \texttt{roman.pogodin.17@ucl.ac.uk}
  \AND 
  Danica J. Sutherland\\
  \phantom{DeepMi}UBC and Amii\thanks{Work done in part while at the Toyota Technological Institute at Chicago.}\phantom{DeepMi}\\
  \texttt{dsuth@cs.ubc.ca}
  \And 
  Arthur Gretton\\
  Gatsby Unit, UCL\\
  \texttt{arthur.gretton@gmail.com}
}
\begin{document}

\maketitle
\setcounter{footnote}{0}

\begin{abstract}
  We approach self-supervised learning of image representations from a statistical dependence perspective, proposing  Self-Supervised Learning with the Hilbert-Schmidt Independence Criterion (SSL-HSIC).
  SSL-HSIC
  maximizes dependence between representations of transformations of an image and the image identity,
  while minimizing the kernelized variance of those representations.
  This framework yields a new understanding of InfoNCE, 
   a variational lower bound on the mutual information (MI) between different transformations. While the MI itself is known to have pathologies which can result in learning meaningless representations, its bound is much better behaved: we show that it implicitly approximates SSL-HSIC (with a slightly different regularizer).
  Our approach also gives us insight into BYOL, a negative-free SSL method, since SSL-HSIC similarly learns local neighborhoods of samples.
  SSL-HSIC  
  allows us to directly optimize statistical dependence in time linear in the batch size,
  without restrictive data assumptions or indirect mutual information estimators.
  Trained with or without a target network, SSL-HSIC matches the current state-of-the-art for standard linear evaluation on ImageNet \citep{russakovsky2015imagenet}, semi-supervised learning and transfer to other classification and vision tasks such as semantic segmentation, depth estimation and object recognition. Code is available at \url{https://github.com/deepmind/ssl_hsic}.

\end{abstract}

\section{Introduction}
Learning general-purpose visual representations without human supervision is a long-standing goal of machine learning.
Specifically, we wish to find a feature extractor that captures the image semantics of a large unlabeled collection of images,
so that e.g.\ various image understanding tasks can be achieved with simple linear models.
One approach takes the latent representation of a likelihood-based generative model
\citep{hinton2006reducing,vincent2010stacked,higgins2016beta,DBLP:journals/corr/abs-2005-14165,coates2012learning,DBLP:journals/corr/abs-1711-00937,GCBD:ffjord};
such models, though, solve a harder problem than necessary since semantic features need not capture low-level details of the input.
Another option is to train a \emph{self-supervised} model for a ``pretext task,'' such as predicting the position of image patches, identifying rotations, or image inpainting \citep{DBLP:journals/corr/DoerschGE15,noroozi2016unsupervised,gidaris2018unsupervised,zhang2016colorful,larsson2016learning,pathak2016context}.
Designing good pretext tasks, however, is a subtle art, with little theoretical guidance available.
Recently, a class of models based on contrastive learning \cite{oord2018representation,bachman2019learning,henaff2020data,chen2020simple,DBLP:journals/corr/abs-1911-05722,DBLP:journals/corr/abs-2003-04297,caron2020unsupervised,DBLP:journals/corr/abs-2103-03230} has seen substantial success:
dataset images are cropped, rotated, color shifted, etc.\ into several \emph{views},
and features are then trained to pull together representations of the ``positive'' pairs of views of the same source image,
and push apart those of ``negative'' pairs (from different images).
These methods are either understood from an information theoretic perspective as estimating the mutual information between the ``positives'' \citep{oord2018representation}, or explained as aligning features subject to a uniformity constraint \cite{DBLP:journals/corr/abs-2005-10242}. Another line of research \cite{grill2020bootstrap,DBLP:journals/corr/abs-2011-10566} attempts to learn representation without the ``negative'' pairs, but requires either a target network or stop-gradient operation to avoid collapsing.

\begin{figure}[t]
    \centering
    \begin{minipage}{0.49\textwidth}
        \centering
        \includegraphics[width=.98\linewidth]{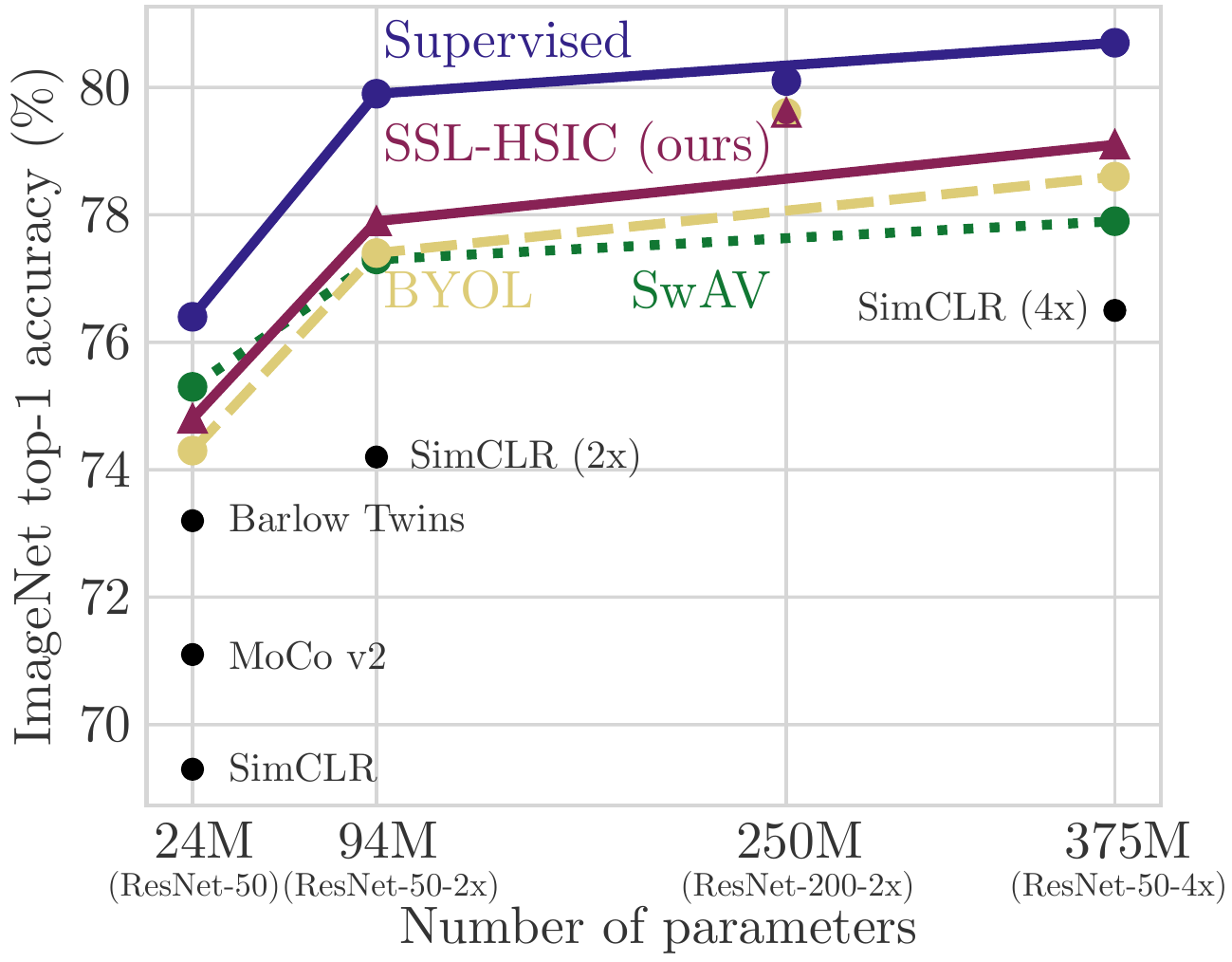}
        \vspace*{-1mm}
        \captionof{figure}{Top-1 accuracies with linear evaluation for different ResNet architecture and methods: supervised (as in \cite{grill2020bootstrap}), SSL-HSIC (with a target network; ours), BYOL \cite{grill2020bootstrap}, SwAV \cite{caron2020unsupervised}, SimCLR \cite{chen2020simple}, MoCo v2 \cite{DBLP:journals/corr/abs-2003-04297} and Barlow Twins \cite{DBLP:journals/corr/abs-2103-03230}.}
        \label{fig:arch-width}
    \end{minipage}
    \hfill
    \begin{minipage}{0.49\textwidth}
        \centering
        \includegraphics[width=.98\linewidth]{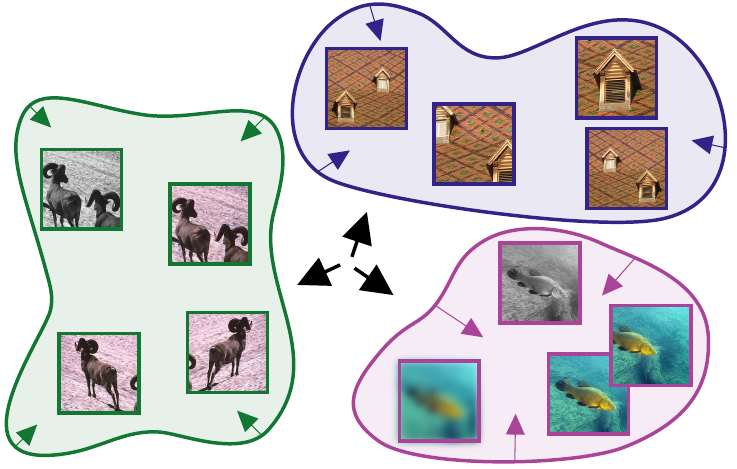}
        \captionof{figure}{Statistical dependence view of contrastive learning: representations of transformed images should highly depend on image identity. Measuring dependence with HSIC, this pushes different images' representation distributions apart (black arrows) and pulls representations of the same image together (colored shapes).}
        \label{fig:hsic-z-y}
    \end{minipage}
\end{figure}
We examine the contrastive framework from a statistical dependence point of view: feature representations for a given transformed image should be highly dependent on the image identity (\cref{fig:hsic-z-y}).
To measure dependence, we turn to the
Hilbert-Schmidt Independence Criterion (HSIC) \citep{gretton2005measuring},
and propose a new loss for self-supervised learning which we call SSL-HSIC.
Our loss is inspired by HSIC Bottleneck \citep{ma2019hsic,pogodin2020kernelized}, an alternative to Information Bottleneck \citep{DBLP:journals/corr/TishbyZ15}, where we use the image identity as the label, but change the regularization term. %

Through the dependence maximization perspective, we present a unified view of various self-supervised losses.
Previous work \cite{tschannen2019mutual} has shown that the success of InfoNCE cannot be solely attributed to properties of mutual information,
in particular because mutual information (unlike kernel measures of dependence) has no notion of geometry in feature space:
for instance, \emph{all} invertible encoders achieve maximal mutual information, but they can output dramatically different representations with very different downstream performance \citep{tschannen2019mutual}.
Variational bounds on mutual information
do impart notions of locality that allow them to succeed in practice,
departing from the mutual information quantity that they try to estimate.
We prove that InfoNCE, a popular such bound, in fact approximates SSL-HSIC with a variance-based regularization.
Thus, InfoNCE can be thought of as working  because it implicitly estimates a kernel-based notion of dependence.
We additionally show SSL-HSIC is related to metric learning, where the features learn to align to the structure induced by the self-supervised labels. This perspective is closely related to the objective of BYOL \citep{grill2020bootstrap}, and can explain properties such as alignment and uniformity \cite{DBLP:journals/corr/abs-2005-10242} observed in contrastive learning.

Our perspective brings additional advantages, in computation and in simplicity of the algorithm, compared with existing approaches.
Unlike the indirect variational bounds on mutual information \cite{oord2018representation,DBLP:journals/corr/abs-1801-04062,DBLP:journals/corr/abs-1905-06922}, SSL-HSIC can be directly estimated from mini-batches of data. 
Unlike ``negative-free'' methods, the SSL-HSIC loss itself penalizes trivial solutions,
so techniques such as target networks are not needed for reasonable outcomes.
Using a target network does improve the performance of our method, however,
suggesting target networks have other advantages that are not yet well understood. 
Finally, we employ random Fourier features \citep{rahimi2007random} in our implementation, resulting in cost
 linear in batch size.

Our main contributions are as follows:
\begin{itemize}[leftmargin=*,noitemsep,topsep=0pt,parsep=0pt,partopsep=0pt]
    \item We introduce SSL-HSIC, a principled self-supervised loss using kernel dependence maximization.
    \item We present a unified view of contrastive learning through dependence maximization, by establishing relationships between SSL-HSIC, InfoNCE, and metric learning.
    \item %
    Our method achieves top-1 accuracy of 74.8\% and top-5 accuracy of 92.2\% with linear evaluations (see \cref{fig:arch-width} for a comparison with other methods), top-1 accuracy of 80.2\% and Top-5 accuracy of 94.7\% with fine-tuning, and competitive performance on a diverse set of downstream tasks.
\end{itemize}

\section{Background}
\subsection{Self-supervised learning}
Recent developments in self-supervised learning, such as contrastive learning, try to ensure that features of two random views of an image are more associated with each other than with random views of other images.
Typically, this is done through some variant of a classification loss, with one ``positive'' pair and many ``negatives.'' Other methods can learn solely from ``positive'' pairs, however.
There have been many variations of this general framework in the past few years.

\Citet{oord2018representation} first formulated the InfoNCE loss, which estimates a lower bound of the mutual information between the feature and the context. 
SimCLR \cite{chen2020simple, chen2020big} carefully investigates the contribution of different data augmentations, and scales up the training batch size to include more negative examples.
MoCo \cite{DBLP:journals/corr/abs-1911-05722} increases the number of negative examples by using a memory bank.
BYOL \cite{grill2020bootstrap} learns solely on positive image pairs, training so that representations of one view match that of the other under a moving average of the featurizer.
Instead of the moving average, SimSiam \cite{DBLP:journals/corr/abs-2011-10566} suggests  a stop-gradient on one of the encoders is enough to prevent BYOL from finding trivial solutions.
SwAV \cite{caron2020unsupervised} clusters the representation online, and uses distance from the cluster centers rather than computing pairwise distances of the data.
Barlow Twins \cite{DBLP:journals/corr/abs-2103-03230} uses an objective related to the cross-correlation matrix of the two views, motivated by redundancy reduction. It is perhaps the most related to our work in the literature (and their covariance matrix can be connected to HSIC \cite{tsai2021note}), but our method measures dependency more directly. While Barlow Twins decorrelates components of final representations, we maximize the dependence between the image's abstract identity and its transformations.

On the theory side, InfoNCE is proposed as a variational bound on Mutual Information between the representation of two views of the same image \cite{oord2018representation,DBLP:journals/corr/abs-1905-06922}. \Citet{tschannen2019mutual} observe that InfoNCE performance cannot be explained solely by the properties of the mutual information, however, but is influenced more by other factors, such as the formulation of the estimator and the architecture of the feature extractor. Essentially, representations with the same MI can have drastically different representational qualities. 
\begin{wrapfigure}{r}{0.32\textwidth}
    \centering
    \includegraphics[width=.98\linewidth]{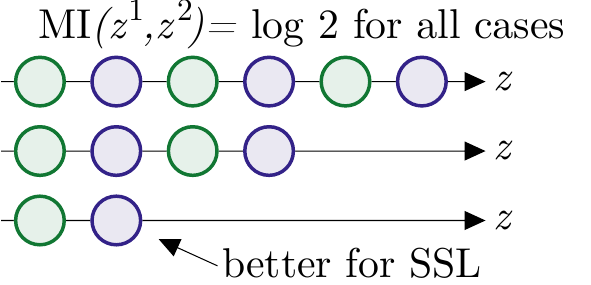}
    \vspace*{-2mm}
    \caption{Three distributions of positive examples for two classes (green and purple) that have the same mutual information, but drastically different quality for downstream learners.}
    \label{fig:mi-bad}
\end{wrapfigure}
To see this, consider a problem with two inputs, $A$ and $B$ (\cref{fig:mi-bad}, green and purple), and a one-dimensional featurizer, parameterized by the integer $M$, which maps $A$ to $\mathrm{Uniform}(\{0, 2, \dots, 2M\})$ and $B$ to $\mathrm{Uniform}(\{1, 3, \dots, 2M + 1\})$. When $M=0$, the inputs are encoded into linearly separable features $A=0$ and $B=1$ (\cref{fig:mi-bad}, bottom). Otherwise when $M>0$, they are interspersed like $ABABABAB$ -- a representation which is much harder to work with for downstream learners. Nevertheless, the mutual information between the features of any two augmentations of the same input (a positive pair) is independent of $M$, that is $H[Z_1] - H[Z_1|Z_2]=\log 2$
for any $M$. Note that InfoNCE would strongly prefer $M=0$, indeed behaving very differently from MI.

Later theories suggest that contrastive losses balance alignment of individual features and uniformity of the feature distribution \cite{DBLP:journals/corr/abs-2005-10242}, or in general alignment and some loss-defined distribution \cite{chen2020intriguing}. We propose to interpret the contrastive loss through the lens of statistical dependence, and relate it to metric learning, which naturally leads to alignment and uniformity.

\subsection{Hilbert-Schmidt Independence Criterion (HSIC)}

The Hilbert-Schmidt Independence Criterion (HSIC) \cite{gretton2005measuring} is a kernel-based measure of dependence between probability distributions.
Like mutual information, for a wide range of kernels $\hsic(X, Y) = 0$ if and only if $X$ and $Y$ are independent \citep{hsic-characteristic}, and large values of the measure correspond to ``more dependence.''
Unlike mutual information,
HSIC incorporates a notion of geometry (via the kernel choice),
and is both statistically and computationally easy to estimate.
It has been used in a variety of applications,
particularly for independence testing \citep{gretton2007kernel},
but it has also been maximized in applications such as
feature selection \citep{song2012feature},
clustering \cite{cluhsic,BG09:taxonomies},
active learning \citep{jain2020information},
and as a classification loss called HSIC Bottleneck \citep{ma2019hsic,pogodin2020kernelized} (similar ideas were expressed in \cite{wu2018dependency, nokland2019training}).

HSIC measures the dependence between two random variables by first taking a nonlinear feature transformation of each,
say $\phi : \mathcal X \to \mathcal F$
and $\psi : \mathcal Y \to \mathcal G$
(with $\mathcal F$ and $\mathcal G$ reproducing kernel Hilbert spaces, RKHSes\footnote{%
In a slight abuse of notation, we use $\phi(x) \psi(y)\trans$ for the tensor product $\phi(x) \otimes \psi(y) \in \mathcal F \otimes \mathcal G$.}),
and then evaluating the norm of the cross-covariance between those features:
\begin{gather}
    \hsic(X, Y) = \left\lVert
    \E[\phi(X) \, \psi(Y)\trans]
    - \E[\phi(X)] \, \E[\psi(Y)]\trans
    \right\rVert^2_\mathit{HS}
.\end{gather}
Here $\lVert \cdot \rVert_\mathit{HS}$ is the Hilbert-Schmidt norm, which in finite dimensions is the usual Frobenius norm.
HSIC measures the scale of the correlation in these nonlinear features,
which allows it to identify nonlinear dependencies between $X$ and $Y$ with appropriate features $\phi$ and $\psi$.

Inner products in an RKHS are by definition \emph{kernel functions}: $k(x,x') = \left\langle \phi(x), \phi(x') \right\rangle_{\mathcal{F}}$ and $l(y,y') = \left\langle \psi(y), \psi(y') \right\rangle_{\mathcal{G}}$.
Let $(X',Y')$, $(X'', Y'')$ be independent copies of $(X, Y)$; this gives
\begin{equation} \label{eq:hsic-pop}
    \hsic(X, Y) =
        \E\left[k(X,X') l(Y,Y')\right]
      -2\E\left[ k(X,X') l(Y, Y'') \right]
    + \E\left[ k(X, X') \right] \E\left[l(Y, Y')\right]
.
\end{equation}
HSIC is also straightforward to estimate: given i.i.d.\ samples $\{(x_1,y_1), \dots, (x_N,y_N)\}$ drawn i.i.d.\ from the joint distribution of $(X, Y)$, \citet{gretton2005measuring} propose an estimator
\begin{align}
    \widehat\hsic(X, Y) &= \frac{1}{(N-1)^2} \tr(KHLH)\,,
    \label{eq:biased-hsic}
\end{align}
where $K_{ij}=k(x_i, x_j)$ and $L_{ij}=l(y_i, y_j)$ are the kernel matrices,
and $H = I - \frac{1}{N} \bb 1 \bb 1\trans$ is called the centering matrix.
This estimator has an $O(1/N)$ bias, which is not a concern for our uses; however, an unbiased estimator with the same computational cost is available \cite{song2012feature}.

\section{Self-supervised learning with Kernel Dependence Maximization}
\label{sec:method}
Our method builds on the self-supervised learning framework used by most of the recent self-supervised learning approaches \cite{bachman2019learning,chen2020simple,DBLP:journals/corr/abs-1911-05722,grill2020bootstrap,caron2020unsupervised,DBLP:journals/corr/abs-2103-03230,DBLP:journals/corr/abs-2011-10566}. For a dataset with $N$ points $x_i$, each point goes through a random transformation 
$t^{p}(x_i)$ (e.g. random crop), and then forms a feature representation $z^{p}_i=f_\theta(t^p(x_i))$ with an encoder network $f_\theta$.
We associate each image $x_i$ with its identity $y_i$, which works as a one-hot encoded label: $y_i\in \RR^{N}$ and $(y_i)_d=1$ iff $d=i$ (and zero otherwise). To match the transformations and image identities, we maximize the dependence between $z_i$ and $y_i$ such that $z_i$ is predictive of its original image. 
To build representations suitable for downstream tasks, we also need to penalize high-variance representations. These ideas come together in our HSIC-based objective for self-supervised learning, which we term SSL-HSIC:
\begin{equation}
    \mathcal{L}_{\mathrm{SSL-HSIC}}(\theta) = -\hsic\brackets{Z, Y} + \gamma\,\sqrt{  \hsic\brackets{Z, Z}}\,.
    \label{eq:ssl_hsic}
\end{equation}

Unlike contrastive losses, which make the $z_{i}^p$ from the same $x_i$ closer and those from different $x_j$ more distant, we propose an alternative way to match different transformations of the same image with its \textit{abstract identity} (e.g.\ position in the dataset). 

Our objective also resembles the HSIC bottleneck for supervised learning \cite{ma2019hsic} (in particular, the version of \cite{pogodin2020kernelized}), but ours uses a square root for $\hsic(Z,Z)$.
The square root makes the two terms on the same scale: $\hsic(Z, Y)$ is effectively a dot product, and $\sqrt{\hsic(Z, Z)}$ a norm, so that e.g.\ scaling the kernel by a constant does not change the relative amount of regularization;\footnote{Other prior work on maximizing HSIC \citep{BG09:taxonomies,BZG:better-taxonomies} used $\hsic(Z, Y) / \sqrt{\hsic(Z, Z) \, \hsic(Y, Y)}$, or equivalently \citep{SSGF:dist-hsic} the distance correlation \citep{SRB:distance-correlation}; the kernel-target alignment \citep{CSEK:kernel-target,cortes2012algorithms} is also closely related. Here, the overall scale of either kernel does not change the objective. Our $\hsic(Y, Y)$ is constant (hence absorbed in $\gamma$), and we found an additive penalty to be more stable in optimization than dividing the estimators.}
this also gives better performance in practice.

Due to the one-hot encoded labels, we can re-write $\hsic\brackets{Z, Y}$ as (see \cref{app:estimator})
\begin{align}
    \hsic(Z,Y) &\propto \mathbb{E}_{z_1, z_2 \sim pos}\left[k(z_1, z_2)\right] - \mathbb{E}_{z_1} \mathbb{E}_{z_2}\left[k(z_1, z_2)\right]\,,
    \label{eq:ssl-hsic-kernels}
\end{align}
where the first expectation is over the distribution of ``positive'' pairs (those from the same source image), and the second one is a sum over all image pairs, including their transformations. The first term in \eqref{eq:ssl-hsic-kernels} pushes representations belonging to the same image identity together, while the second term keeps mean representations for each identity apart (as in \cref{fig:hsic-z-y}). The scaling of $\hsic(Z, Y)$ depends on the choice of the kernel over $Y$, and is irrelevant to the optimization.

This form also reveals three key theoretical results.
\Cref{subsec:infonce_connection} shows that InfoNCE is better understood as an HSIC-based loss than a mutual information between views.
\Cref{subsec:clustering} reveals that the dependence maximization in $\hsic(Z, Y)$ can also be viewed as a form of distance metric learning, where the cluster structure is defined by the labels.
Finally, $\hsic(Z,Y)$ is proportional to the average kernel-based distance between the distribution of views for each source image (the maximum mean discrepancy, MMD; see \cref{app:mmd}).

\subsection{Connection to InfoNCE}
\label{subsec:infonce_connection}
    In this section we show the connection between InfoNCE and our loss; see \cref{app:section:theory} for full details. We first write the latter in its infinite sample size limit (see \citep{DBLP:journals/corr/abs-2005-10242} for a derivation) as
\begin{equation}
    \mathcal{L}_{\mathrm{InfoNCE}}(\theta)=- \expect_{z_1,z_2\sim\mathrm{pos}}\left[k(z_1, z_{2})\right] + \expect_{z_1}\log\expect_{z_2} \left[ \exp \brackets{k(z_{1}, z_{2})} \right] \,,
    \label{eq:infoNCE_mean}
\end{equation}
where the last two expectations are taken over all points, and the first is over the distribution of positive pairs. The kernel $k(z_{1}, z_{2})$ was originally formulated as a scoring function in a form of a dot product \cite{oord2018representation}, and then a scaled cosine similarity \cite{chen2020simple}. Both functions are valid kernels.

Now assume that $k(z_1, z_2)$ doesn't deviate much from $\expect_{z_2} \left[k(z_1, z_2)\right]$, Taylor-expand the exponent in \eqref{eq:infoNCE_mean} around $\expect_{z_2} \left[k(z_1, z_2)\right]$, then expand $\log(1+\expect_{z_2}(\dots))\approx \expect_{z_2}(\dots)$.
We obtain an $\hsic(Z, Y)$-based objective:
\begin{equation}
    \mathcal{L}_{\mathrm{InfoNCE}}(\theta) \approx  \underbrace{- \expect_{z_1,z_2\sim \mathrm{pos}}\left[k(z_1, z_2)\right] + \expect_{z_1}\expect_{z_2} \left[k(z_1, z_2) \right]}_{\propto-\hsic(Z, Y)} +\frac{1}{2} \underbrace{\expect_{z_1} \left[\var_{z_2}\left[k(z_1, z_2)\right] \right]}_{\textrm{variance penalty}}\,.
    \label{eq:infonce_taylor}
\end{equation}

Since the scaling of $\hsic(Z,Y)$ is irrelevant to the optimization, we assume scaling to replace $\propto$ with $=$. In the small variance regime, we can show that for the right $\gamma$,
\begin{equation}
    -\hsic\brackets{Z, Y} + \gamma\,\hsic\brackets{Z, Z} \leq \mathcal{L}_{\mathrm{InfoNCE}}(\theta) + o(\mathrm{variance})\,.
\end{equation}
For $\hsic(Z, Z) \leq 1$, we also have that
\begin{equation}
    -\hsic\brackets{Z, Y} + \gamma\,\hsic\brackets{Z, Z} \leq \mathcal{L}_{\mathrm{SSL-HSIC}}(\theta)
\end{equation}
due to the square root. InfoNCE and SSL-HSIC in general don't quite bound each other due to discrepancy in the variance terms, but in practice the difference is small.  %

Why should we prefer the HSIC interpretation of InfoNCE?
Initially, InfoNCE was suggested as a variational approximation to the mutual information between two views \cite{oord2018representation}.
It has been observed, however, that using tighter estimators of mutual information leads to worse performance \cite{tschannen2019mutual}.
It is also simple to construct examples where InfoNCE finds different representations while the underlying MI remains constant \cite{tschannen2019mutual}. 
Alternative theories suggest that InfoNCE balances alignment of ``positive'' examples and uniformity of the overall feature representation \cite{DBLP:journals/corr/abs-2005-10242}, or that (under strong assumptions) it can identify the latent structure in a hypothesized data-generating process, akin to nonlinear ICA \citep{zimmermann2021cl}. Our view is consistent with these theories, but doesn't put restrictive assumptions on the input data or learned representations. In \cref{sec:ablations} (summarized in \cref{tab:ablation-reg}), we show that our interpretation gives rise to a better objective in practice. %

\subsection{Connection to metric learning}
\label{subsec:clustering}

Our SSL-HSIC objective is closely related to kernel alignment \cite{CSEK:kernel-target}, especially centered kernel alignment \cite{cortes2012algorithms}. As a kernel method for distance metric learning, kernel alignment measures the agreement between a kernel function and a target function. Intuitively, the self-supervised labels $Y$ imply a cluster structure, and $\hsic(Z,Y)$ estimates the degree of agreement between the learned features and this cluster structure in the kernel space.
This relationship with clustering is also established in \cite{cluhsic,BG09:taxonomies,BZG:better-taxonomies}, where labels are learned rather than  features. The clustering perspective is more evident when we assume linear kernels over both $Z$ and $Y$, and $Z$ is unit length and centered:\footnote{Centered $Z$ is a valid assumption for BYOL, as the target network keeps representations of views with different image identities away from each other. For high-dimensional unit vectors, this can easily lead to orthogonal representations. We also observe centered representations empirically: see \cref{app:section:features}.}
\begin{multline}
    -\hsic(Z,Y)
    \propto - \frac{1}{M} Tr(Y\trans Z\trans Z Y) + Tr(Z\trans Z)-NM \\  %
    = - \frac{1}{M} \sum_{i=1}^N \left\lVert \sum_{p=1}^M z_{i}^{p} \right\rVert_2^2
    + \sum_{i=1}^N\sum_{p=1}^M \left\lVert z_{i}^p \right\rVert_2^2 - NM
    = \sum_{i=1}^N \sum_{p=1}^M  \left\lVert z_i^p - \bar{z_i} \right\rVert_2^2 -NM,
    \label{eq:clustering}
\end{multline}
with $M$ the number of augmentations per image and $\bar{z_i}=\sum_p z_i^p / M$ the average feature vector of the augmented views of $x_i$.
We emphasize, though, that \eqref{eq:clustering} assumes centered, normalized data with linear kernels;
the right-hand side of \eqref{eq:clustering} could be optimized by setting all $z_i^p$ to the same vector for each $i$, but this does not actually optimize $\hsic(Z, Y)$.

\cref{eq:clustering} shows that we recover the spectral formulation \cite{zha2001spectral} and sum-of-squares loss used in the k-means clustering algorithm from the kernel objective. Moreover, the self-supervised label imposes that the features from transformations of the same image are gathered in the same cluster.
\cref{eq:clustering} also allows us to connect SSL-HSIC to non-contrastive objectives such as BYOL, although the connection is subtle because of its use of predictor and target networks. If each image is augmented with two views, we can compute \eqref{eq:clustering} using $\bar{z_i}\approx(z_i^1+z_i^2) / 2$, so the clustering loss becomes $\propto \sum_i ||z_i^1 - z_i^2||_2^2$. This is exactly the BYOL objective, only that $z_i^2$ in BYOL comes from a target network.
The assumption of centered and normalized features for \eqref{eq:clustering} is important in the case of BYOL: without it, BYOL can find trivial solutions where all the features are collapsed to the same feature vector far away from the origin. The target network is used to prevent the collapse. SSL-HSIC, on the other hand, rules out such a solution by building the centering into the loss function, and therefore can be trained successfully without a target network or stop gradient operation.

\subsection{Estimator of SSL-HSIC}
\label{sec:estimator}
To use SSL-HSIC, we need to correctly and efficiently estimate \eqref{eq:ssl_hsic}. Both points are non-trivial: the self-supervised framework implies non-i.i.d.\ batches (due to positive examples), while the estimator in \eqref{eq:biased-hsic} assumes i.i.d.\ data; moreover, the time to compute \eqref{eq:biased-hsic} is quadratic in the batch size. 

First, for $\hsic(Z, Z)$ we use the biased estimator in \eqref{eq:biased-hsic}. Although the i.i.d.\ estimator \eqref{eq:biased-hsic} results in an $O(1/B)$ bias for $B$ original images in the batch size (see \cref{app:estimator}), the batch size $B$ is large in our case and therefore the bias is negligible. For $\hsic(Z, Y)$ the situation is more delicate: the i.i.d.\ estimator needs re-scaling, and its bias depends on the number of positive examples $M$, which is typically very small (usually 2). We propose the following estimator:
\begin{align}
    \widehat{\hsic}(Z, Y)\,&= \frac{\Delta l}{N}\brackets{\frac{1}{BM(M-1)}\sum_{ipl} k(z^{p}_{i}, z^{l}_{i}) - \frac{1}{B^2M^2}\sum_{ijpl} k(z^{p}_{i}, z^{l}_{j}) - \frac{1}{M-1}}\,,
\end{align}
where $i$ and $j$ index original images, and $p$ and $l$ their random transformations; $k$ is the kernel used for latent $Z$, $l$ is the kernel used for the labels, and $\Delta l=l(i, i) - l(i, j)$ ($l$ for same labels minus $l$ for different labels).
Note that due to the one-hot structure of self-supervised labels $Y$, the standard (i.i.d.-based) estimator would miss the $1/N$ scaling and the $M-1$ correction (the latter is important in practice, as we usually have $M=2$). See \cref{app:estimator} for the derivations.

For convenience, we assume $\Delta l=N$  (any scaling of $l$ can be subsumed by $\gamma$), and optimize
\begin{equation}
    \widehat{\mathcal{L}}_{\mathrm{SSL-HSIC}}(\theta) = - \widehat{\hsic}(Z, Y)  +\gamma\,\sqrt{ \widehat{\hsic}(Z, Z)}\,.
    \label{eq:ssl_hsic_batch}
\end{equation}

The computational complexity of the proposed estimators is $O(B^2M^2)$ for each mini-batch of size $B$ with $M$ augmentations. 
We can reduce the complexity to $O(BM)$ by using random Fourier features (RFF) \cite{rahimi2007random}, which approximate the kernel $k(z_1, z_2)$ with a carefully chosen random $D$-dimensional approximation $R(z_1)\trans R(z_2)$ for $R(z): \RR^{D_z}\rightarrow \RR^{D}$, such that 
$
    k(z_1, z_2) = \expect \left[R (z_1)\trans R(z_2)\right]
.$
Fourier frequencies are sampled independently for the two kernels on $Z$ in $\hsic(Z, Z)$ at each training step. We leave the details on how to construct $R(z)$ for the kernels we use to Appendix \ref{app:rff}.

\section{Experiments}
In this section, we present our experimental setup, where we assess the performance of the representation learned with SSL-HSIC both with and without a target network. First, we train a model with a standard ResNet-50 backbone using SSL-HSIC as objective on the training set of ImageNet ILSVRC-2012 \cite{russakovsky2015imagenet}. For evaluation, we retain the backbone as a feature extractor for downstream tasks. We evaluate the representation on various downstream tasks including classification, object segmentation, object detection and depth estimation.

\begin{figure}[htbp]
    \centering
    \includegraphics[width=\textwidth]{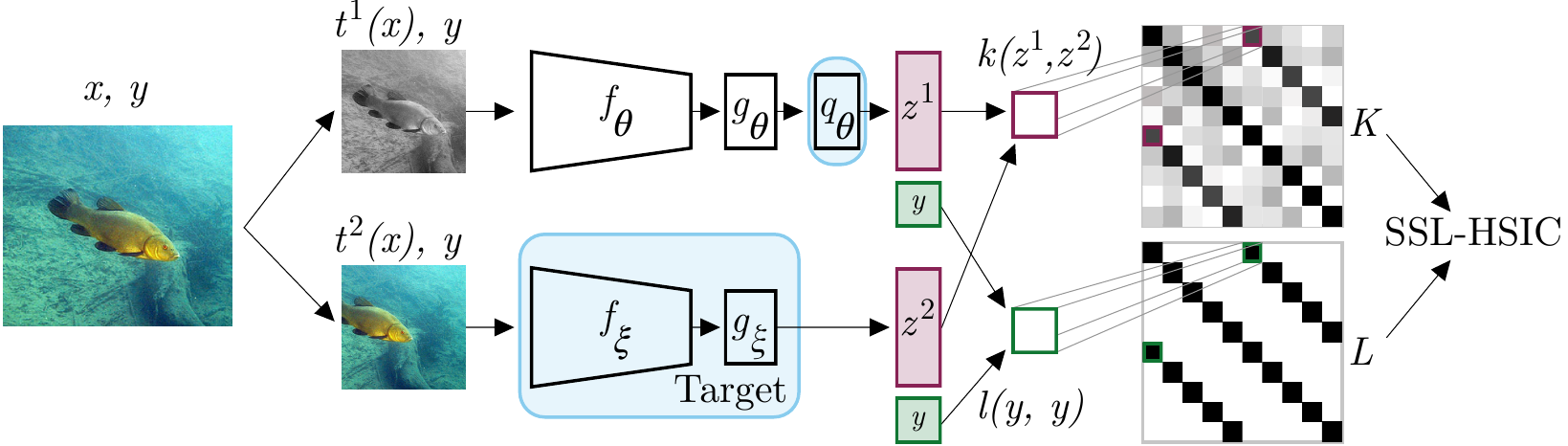}
    \caption{Architecture and SSL-HSIC objective. A self-supervised label $y$ -- an indicator of the image identity -- is associated with an image $x$.
    Image transformation functions $t$ are sampled and applied to the original image, resulting in views $t^1(x)$ and $t^2(x)$. 
    Features $z^1$ and $z^2$ are obtained after passing the augmented views through encoder ($f$), projector ($g$), and possibly predictor ($q$) networks, while label $y$ is retained.
    Kernel matrices, $K$ for the latents and $L$ for the labels, are computed on the mini-batch of data;
    SSL-HSIC is estimated with $K$ and $L$ as in \eqref{eq:ssl_hsic_batch}. The blue boxes reflect two potential options: when using a target network, $\xi$ is a moving average of $\theta$, and a predictor network $q$ is added;
    without the target network, $q$ is removed and $\xi$ is simply equal to $\theta$.    }
    \label{fig:architecture}
\end{figure}

\subsection{Implementation}
\textbf{Architecture}
\cref{fig:architecture} illustrates the architecture we used for SSL-HSIC in this section. To facilitate comparison between different methods, our encoder $f_{\theta}$ uses the standard ResNet-50 backbone without the final classification layer. The output of the encoder is a 2048-dimension embedding vector, which is the representation used for downstream tasks. As in BYOL \cite{grill2020bootstrap}, our projector $g$ and predictor $q$ networks are 2-layer MLPs with $4096$ hidden dimensions and $256$ output dimensions. The outputs of the networks are batch-normalized and rescaled to unit norm before computing the loss. We use an inverse multiquadric kernel (IMQ) for the latent representation (approximated with 512 random Fourier features that are resampled at each step; see \cref{app:rff} for details) and a linear kernel for labels. $\gamma$ in \eqref{eq:ssl_hsic} is set to $3$. When training without a target network, unlike SimSiam \cite{DBLP:journals/corr/abs-2011-10566}, we do not stop gradients for either branch. If the target network is used, its weights are an exponential moving average of the online network weights. We employ the same schedule as BYOL \cite{grill2020bootstrap}, $\tau = 1 - 0.01 \cdot (\cos(\pi t/T) + 1)/2$ with $t$ the current step and $T$ the total training steps.

\textbf{Image augmentation}
Our method uses the same data augmentation scheme as BYOL (see \cref{app:experiment-pretrain}). Briefly, we first draw a random patch from the original image and resize it to $224 \times 224$. Then, we apply a random horizontal flip, followed by color jittering, consisting of a random sequence of brightness, contrast, saturation, hue adjustments, and an optional grayscale conversion. Finally Gaussian blur and solarization are applied,
and the view is normalized with ImageNet statistics.

\textbf{Optimization}
We train the model with a batch size of 4096 on 128 Cloud TPU v4 cores. Again, following \cite{chen2020simple,grill2020bootstrap}, we use the LARS optimizer \cite{DBLP:journals/corr/abs-1708-03888} with a cosine decay learning rate schedule over 1000 epochs. The base learning rate to all of our experiments is $0.4$ and it is scaled linearly \cite{DBLP:journals/corr/GoyalDGNWKTJH17} with the batch size $lr = 0.4 \times batch\_size / 256$. All experiments use weight decay of $10^{-6}$.

\textbf{Learning kernel parameters}
We use a linear kernel for labels, since the type of kernel only scales \eqref{eq:ssl_hsic_batch}. Our inverse multiquadric kernel for the latent $Z$ has an additional kernel scale parameter. We optimize this along with all other parameters, but regularize it to maximize the entropy of the distribution $k_{\sigma}(s)$, where $s_{ij}= \lVert z_i - z_j \rVert^2$; this amounts to maximizing $\log \lVert k_{\sigma}'(s) \rVert^2$ (\cref{app:kernel-param}).

\subsection{Evaluation Results}
\textbf{Linear evaluation on ImageNet}
Learned features are evaluated with the standard linear evaluation protocol commonly used in evaluating self-supervised learning methods \cite{oord2018representation,bachman2019learning,henaff2020data,chen2020simple,DBLP:journals/corr/abs-1911-05722,DBLP:journals/corr/abs-2003-04297,grill2020bootstrap,caron2020unsupervised,DBLP:journals/corr/abs-2103-03230}. \cref{tab:linear-imagenet} reports the top-1 and top-5 accuracies obtained with SSL-HSIC on ImageNet validation set, and compares to previous self-supervised learning methods. Without a target network, our method reaches 72.2\% top-1 and 90.7\% top-5 accuracies. Unlike BYOL, the SSL-HSIC objective prevents the network from finding trivial solutions as explained in \cref{subsec:clustering}. Adding the target network, our method outperforms most previous methods, achieving top-1 accuracy of $74.8\%$ and top-5 accuracy of $92.2\%$. The fact that we see performance gains from adopting a target network suggests that its effect is not yet well understood, although note discussion in \cite{grill2020bootstrap} which points to its stabilizing effect. 

\begin{table}[htbp]
\begin{minipage}[t]{.38\linewidth}
\centering
\scriptsize
\caption{Linear evaluation on the ImageNet validation set.}
\label{tab:linear-imagenet}
\begin{tabular}[b]{@{}lcc@{}}
\toprule
& Top-1(\%) & Top-5(\%) \\
\midrule
Supervised \cite{DBLP:journals/corr/abs-1905-03670} & 75.9 & 92.8\\
\midrule
SimCLR \cite{chen2020simple} & 69.3 & 89.0 \\
MoCo v2 \cite{DBLP:journals/corr/abs-2003-04297} & 71.1 & 90.1 \\
BYOL \cite{grill2020bootstrap} & 74.3 & 91.6 \\
SwAV \cite{caron2020unsupervised} & \textbf{75.3} & - \\
Barlow Twins \cite{DBLP:journals/corr/abs-2103-03230} & 73.2 & 91.0 \\
\rowcolor{LightBlue}
SSL-HSIC (w/o target) & 72.2 & 90.7 \\
\rowcolor{LightBlue}
SSL-HSIC (w/ target) & 74.8 & \textbf{92.2} \\
\bottomrule
\end{tabular}
\end{minipage}\hspace{\fill}%
\begin{minipage}[t]{0.58\linewidth}
\centering
\scriptsize
\caption{Fine-tuning on 1\%, 10\% and 100\% of the ImageNet training set and evaluating on the validation set.}
\label{tab:finetune-imagenet}
\begin{tabular}[b]{@{}lcccccc@{}}
\toprule
 & \multicolumn{3}{c}{Top-1(\%)} & \multicolumn{3}{c}{Top-5(\%)}\\
\cmidrule(r){2-4}
\cmidrule(r){5-7}
& 1\% & 10\% & 100\% & 1\% & 10\% & 100\% \\
\midrule
Supervised \cite{DBLP:journals/corr/abs-1905-03670} & 25.4&56.4&75.9&48.4&80.4&92.8\\
\midrule
SimCLR \cite{chen2020simple} & 48.3&65.6&76.0&75.5&87.8&93.1\\
BYOL \cite{grill2020bootstrap} & 53.2&68.8&\textbf{77.7}&78.4&89.0&\textbf{93.9}\\
SwAV \cite{caron2020unsupervised} & 53.9&\textbf{70.2}&-&78.5&\textbf{89.9}&-\\
Barlow Twins \cite{DBLP:journals/corr/abs-2103-03230} & \textbf{55.0}&69.7&-&\textbf{79.2}&89.3&-\\
\rowcolor{LightBlue}
SSL-HSIC (w/o target)& 45.3&65.5&76.4&72.7&87.5&93.2\\
\rowcolor{LightBlue}
SSL-HSIC (w/ target) & 52.1&67.9&77.2&77.7&88.6&93.6\\
\bottomrule
\end{tabular}
\end{minipage}
\end{table}

\textbf{Semi-supervised learning on ImageNet}
We fine-tune the network pretrained with SSL-HSIC on 1\%, 10\% and 100\% of ImageNet, using the same ImageNet splits as SimCLR \cite{chen2020simple}. \Cref{tab:finetune-imagenet} summarizes the semi-supervised learning performance. Our method, with or without a target network, has competitive performance in both data regimes. The target network has the most impact on the small-data regime, with 1\% labels. 

\begin{table}[htbp]
\centering
\setlength\tabcolsep{2pt} 
\scriptsize
\caption{Comparison of transfer learning performance on 12 image datasets. Supervised-IN is trained on ImageNet with supervised pretrainining. Random init trains on individual dataset with randomly initialized weights. MPCA refers to mean per-class accuracy; AP50 is average precision at IoU=0.5.}
\begin{tabular}{@{}lcccccccccccc@{}}
\toprule
Dataset & Birdsnap & Caltech101 & Cifar10 & Cifar100 & DTD & Aircraft & Food & Flowers & Pets & Cars & SUN397 & VOC2007 \\
Metric & Top-1 & MPCA & Top-1 & Top-1 & Top-1 & MPCA & Top-1 & MPCA & MPCA & Top-1 & Top-1 & AP50 \\
\midrule
\textit{Linear:}\\
\midrule
Supervised-IN \cite{chen2020simple} &53.7 & \textbf{94.5} & \textbf{93.6} & 78.3 & 74.9 & \textbf{61.0} & 72.3 & 94.7 & \textbf{91.5} & \textbf{67.8} & \textbf{61.9} & 82.8 \\
SimCLR \cite{chen2020simple} &37.4&90.3&90.6&71.6&74.5&50.3&68.4&90.3&83.6&50.3&58.8&80.5\\
BYOL \cite{grill2020bootstrap}&57.2&94.2&91.3&\textbf{78.4}&75.5&60.6&75.3&\textbf{96.1}&90.4&66.7&62.2&82.5\\
\rowcolor{LightBlue}
SSL-HSIC (w/o target) & 50.6&92.3&91.5&75.9&75.3&57.9&73.6&95.0&88.2&59.3&61.0&81.4\\
\rowcolor{LightBlue}
SSL-HSIC (w/ target)&\textbf{57.8}&93.5&92.3&77.0&\textbf{76.2}&58.5&\textbf{75.6}&95.4&91.2&62.6&61.8&\textbf{83.3}\\
\midrule
\textit{Fine-tuned:}\\
\midrule
Supervised-IN \cite{chen2020simple} &75.8 & 93.3 & 97.5 & \textbf{86.4} & 74.6 & 86.0 & 88.3 & \textbf{97.6} & \textbf{92.1} & \textbf{92.1} & \textbf{94.3} & 85.0 \\
Random init \cite{chen2020simple} & \textbf{76.1} & 72.6 & 95.9 & 80.2 & 64.8 & 85.9 & 86.9 & 92.0 & 81.5 & 91.4 & 53.6 & 67.3 \\
SimCLR \cite{chen2020simple} & 75.9 & 92.1 & 97.7 & 85.9 & 73.2 & 88.1 & 88.2 & 97.0 & 89.2 & 91.3 & 63.5 & 84.1 \\
BYOL \cite{grill2020bootstrap} & 76.3 & \textbf{93.8} & \textbf{97.8} & 86.1 & \textbf{76.2} & 88.1 & \textbf{88.5} & 97.0 & 91.7 & 91.6 & 63.7 & \textbf{85.4} \\
\rowcolor{LightBlue}
SSL-HSIC (w/o target) & 73.1 & 91.5 & 97.4 & 85.3 & 75.3 & 87.1 & 87.5 & 96.4 & 90.6 & 91.6 & 62.2 & 84.1 \\
\rowcolor{LightBlue}
SSL-HSIC (w/ target) & 74.9 & \textbf{93.8} & \textbf{97.8} & 84.7 & 75.4 & \textbf{88.9} & 87.7 & 97.3 & 91.7 & 91.8 & 61.7 & 84.1 \\
\bottomrule
\end{tabular}
\label{tab:linear-classification}
\end{table}

\textbf{Transfer to other classification tasks}
To investigate the generality of the representation learned with SSL-HSIC, we evaluate the transfer performance for classification on 12 natural image datasets \cite{fei2004learning,Krizhevsky09learningmultiple,DBLP:journals/corr/CimpoiMKMV13,DBLP:journals/corr/MajiRKBV13,bossard2014food,nilsback2008automated,parkhi12a,Krause13collectinga,xiao2010sun,everingham2010pascal} using the same procedure as \cite{chen2020simple, grill2020bootstrap, DBLP:journals/corr/abs-1805-08974}. \cref{tab:linear-classification} shows the top-1 accuracy of the linear evaluation and fine-tuning performance on the test set. SSL-HSIC gets state-of-the-art performance on 3 of the classification tasks and reaches strong performance on others for this benchmark, indicating the learned representations are robust for transfer learning.

\textbf{Transfer to other vision tasks}
To test the ability of transferring to tasks other than classification, we fine-tune the network on semantic segmentation, depth estimation and object detection tasks. We use Pascal VOC2012 dataset \cite{everingham2010pascal} for semantic segmentation, NYU v2 dataset \cite{Silberman:ECCV12} for depth estimation and COCO \cite{DBLP:journals/corr/LinMBHPRDZ14} for object detection. Object detection outputs either bounding box or object segmentation (instance segmentation). Details of the evaluations setup is in \cref{app:experiment-eval}. \cref{tab:finetune-segmentation} and \cref{tab:object-detection} shows that SSL-HSIC achieves competitive performance on all three vision tasks.

\begin{table}[htbp]
\begin{minipage}[t]{0.65\linewidth}
\centering
\caption{Fine-tuning performance on semantic segmentation and depth estimation. Mean Intersection over Union (mIoU) is reported for semantic segmentation. Relative error (rel), root mean squared error (rms), and the percent of pixels (pct) where the error is below $1.25^n$ thresholds are reported for depth estimation.}
\label{tab:finetune-segmentation}
\resizebox{\textwidth}{!}{
\begin{tabular}{@{}lcccccc@{}}
\toprule
& VOC2012 & \multicolumn{5}{c}{NYU v2}\\
\cmidrule(r){3-7}
Method & mIoU & pct.$<1.25$ &pct.$<1.25^2$ &pct.$<1.25^3$&rms&rel\\
\midrule
Supervised-IN&74.4&81.1&95.3&98.8&0.573&\textbf{0.127}\\
SimCLR&75.2&83.3&96.5&99.1&0.557&0.134\\
BYOL&\textbf{76.3}&\textbf{84.6}&96.7&99.1&0.541&0.129\\
\rowcolor{LightBlue}
SSL-HSIC(w/o target)&74.9 &84.1&96.7&\textbf{99.2}&\textbf{0.539}&0.130\\
\rowcolor{LightBlue}
SSL-HSIC(w/ target)&76.0&83.8&\textbf{96.8}&99.1&0.548&0.130\\
\bottomrule
\end{tabular}
}
\end{minipage}\hspace{\fill}%
\begin{minipage}[t]{0.33\linewidth}
\centering
\caption{Fine-tuning performance on COCO object detection tasks. Precision, averaged over 10 IoU (Intersection over Union) thresholds, is reported for both bounding box and object segmentation.}
\label{tab:object-detection}
\resizebox{\textwidth}{!}{
\begin{tabular}{@{}lcc@{}}
\toprule
Method & AP$^{bb}$ & AP$^{mk}$ \\
\midrule
Supervised&39.6&35.6\\
SimCLR&39.7&35.8\\
MoCo v2&40.1&36.3\\
BYOL&\textbf{41.6}&37.2\\
SwAV&\textbf{41.6}&\textbf{37.8}\\
\rowcolor{LightBlue}
SSL-HSIC(w/o target) &40.5&36.3\\
\rowcolor{LightBlue}
SSL-HSIC(w/ target) &41.3&36.8\\
\bottomrule
\end{tabular}
}
\end{minipage}
\end{table}

\section{Ablation Studies}
\label{sec:ablations}

We present ablation studies to gain more intuition on SSL-HSIC. Here, we use a ResNet-50 backbone trained for 100 epochs on ImageNet, and evaluate with the linear protocol unless specified.

\textbf{ResNet architectures}
In this ablation, we investigate the performance of SSL-HSIC with wider and deeper ResNet architecture. \Cref{fig:arch-width} and \cref{tab:larger-net} show our main results. The performance of SSL-HSIC gets better with larger networks. We used the supervised baseline from \cite{grill2020bootstrap} which our training framework is based on (\cite{chen2020simple} reports lower performance). The performance gap between SSL-HSIC and the supervised baseline diminishes with larger architectures. In addition, \cref{tab:larger-net-semi} presents the semi-supervise learning results with subsets 1\%, 10\% and 100\% of the ImageNet data.

\begin{table}[htbp]
\begin{minipage}[t]{.5\linewidth}
\centering
\scriptsize
\caption{Top-1 and top-5 accuracies for different ResNet architectures using linear evaluation protocol.}
\label{tab:larger-net}
\begin{tabular}{@{}l>{\columncolor{LightBlue}}c>{\columncolor{LightBlue}}ccccc@{}}
\toprule
&\multicolumn{2}{c}{SSL-HSIC}&\multicolumn{2}{c}{BYOL\cite{grill2020bootstrap}}&\multicolumn{2}{c}{Sup.\cite{grill2020bootstrap}}\\
\cmidrule(l){2-3}
\cmidrule(l){4-5}
\cmidrule(l){6-7}
ResNet & Top1 & Top5 & Top1 & Top5 & Top1 & Top5\\
\midrule
50 (1x)&74.8&92.2&74.3&91.6&76.4&92.9\\
50 (2x)&77.9&94.0&77.4&93.6&79.9&95.0\\
50 (4x)&79.1&94.5&78.6&94.2&80.7&95.3\\
200 (2x)&79.6&94.8&79.6&94.9&80.1&95.2\\
\bottomrule
\end{tabular}
\end{minipage}\hspace{\fill}%
\begin{minipage}[t]{0.45\linewidth}
\centering
\scriptsize
\caption{Top-1 and top-5 accuracies for different ResNet architectures using semi-supervised fine-tuning.}
\label{tab:larger-net-semi}
\begin{tabular}{@{}lcccccc@{}}
\toprule
&\multicolumn{3}{c}{Top1}&\multicolumn{3}{c}{Top5}\\
\cmidrule(l){2-4}
\cmidrule(l){5-7}
ResNet & 1\% & 10\% & 100\% & 1\% & 10\% & 100\%\\
\midrule
50 (1x)&52.1&67.9&77.2&77.7&88.6&93.6\\
50 (2x)&61.2&72.6&79.3&83.8&91.2&94.7\\
50 (4x)&67.0&75.4&79.7&87.4&92.5&94.8\\
200(2x)&69.0&76.3&80.5&88.3&92.9&95.2\\
\bottomrule
\end{tabular}
\end{minipage}
\end{table}

\textbf{Regularization term}
We compare performance of InfoNCE with SSL-HSIC in \Cref{tab:ablation-reg} since they can be seen as approximating the same $\hsic(Z, Y)$ objective but with different forms of regularization. We reproduce the InfoNCE result in our codebase, using the same architecture and data augmentiation as for SSL-HSIC. Trained for 100 epochs (without a target network), InfoNCE achieves 66.0\% top-1 and 86.9\% top-5 accuracies, which is better than the result reported in \cite{chen2020simple}. For comparison, SSL-HSIC reaches 66.7\% top-1 and 87.6\% top-5 accuracies. This suggests that the regularization employed by SSL-HSIC is more effective.

\textbf{Kernel type}
We investigate the effect of using different a kernel on latents $Z$. Training without a target network or random Fourier feature approximation, the top-1 accuracies for linear, Gaussian, and inverse multiquadric (IMQ) kernels are $65.27\%$, $66.67\%$ and $66.72\%$ respectively. %
Non-linear kernels indeed improve the performance; Gaussian and IMQ kernels reach very similar performance for 100 epochs. We choose IMQ kernel for longer runs, because its heavy-tail property can capture more signal when points are far apart.

\begin{table}[htbp]
\caption{Linear evaluation results when varying different hyperparameters.}
\begin{subtable}[t]{0.23\linewidth}
\centering
\scriptsize
\captionof{table}{Regularization}
\label{tab:ablation-reg}
\begin{tabular}{@{}lcc@{}}
\toprule
& Top-1 & Top-5\\ %
\midrule
SSL-HSIC&66.7&87.6\\
InfoNCE&66.0&86.9\\
\bottomrule
\end{tabular}
\end{subtable}\hspace{\fill}%
\begin{subtable}[t]{0.23\linewidth}
\centering
\scriptsize
\captionof{table}{\# Fourier features}
\label{tab:ablation-rff}
\begin{tabular}[t]{@{}lcc@{}}
\toprule
\# RFFs & Top-1(\%)\\
\midrule
64&66.0\\
128&66.2\\
256&66.2\\
512&66.4\\
1024&66.5\\
2048&66.5\\
No Approx.&66.7\\
\bottomrule
\end{tabular}
\end{subtable}\hspace{\fill}%
\begin{subtable}[t]{0.3\linewidth}
\centering
\scriptsize
\captionof{table}{Batch size}
\label{tab:ablation-batch}
\begin{tabular}[t]{@{}lcc@{}}
\toprule
 & \multicolumn{2}{c}{Top-1(\%)}\\
\cmidrule(r){2-3}
Batch Size & SSL-HSIC & SimCLR\\
\midrule
256&63.7&57.5\\
512&65.6&60.7\\
1024&66.7&62.8\\
2048&67.1&64.0\\
4096&66.7&64.6\\
\bottomrule
\end{tabular}
\end{subtable}\hspace{\fill}%
\begin{subtable}[t]{0.24\linewidth}
\centering
\scriptsize
\captionof{table}{Projector/predictor size}
\label{tab:ablation-proj}
\begin{tabular}[t]{@{}lc@{}}
\toprule
Output Dim & Top-1(\%)\\
\midrule
64&65.4\\
128&66.0\\
256&66.4\\
512&66.6\\
1024&66.6\\
\bottomrule
\end{tabular}
\end{subtable}
\end{table}

\textbf{Number of RFF Features}
\cref{tab:ablation-rff} shows the performance of SSL-HSIC with different numbers of Fourier features. The RFF approximation has a minor impact on the overall performance, as long as we resample them; fixed sets of features performed poorly.
Our main result picked 512 features, for substantial computational savings with minor loss in accuracy.

\textbf{Batch size}
Similar to most of the self-supervised learning methods \cite{chen2020simple,grill2020bootstrap}, SSL-HSIC benefits from using a larger batch size during training. However, the drop of performance from using smaller batch size is not as pronounced as it is in SimCLR\cite{chen2020simple} as shown in \Cref{tab:ablation-batch}. 

\textbf{Projector and predictor output size}
\cref{tab:ablation-proj} shows the performance when using different output dimension for the projector/predictor networks. The performance saturates at 512 dimensions.

\section{Conclusions}

We introduced SSL-HSIC, a loss function for self-supervised representation learning based on kernel dependence maximization. We provided a unified view on various self-supervised learning losses: we proved that InfoNCE, a lower bound of mutual information, actually approximates SSL-HSIC with a variance-based regularization, and we can also interpret SSL-HSIC as metric learning where the cluster structure is imposed by the self-supervised label, of which the BYOL objective is a special case. We showed that training with SSL-HSIC achieves performance on par with the state-of-the-art on the standard self-supervised benchmarks.

Although using the image identity as self-supervised label provides a good inductive bias, it might not be wholly satisfactory; we expect that some images pairs are in fact more similar than others, based e.g.\ on their ImageNet class label. It will be interesting to explore methods that combine label structure discovery with representation learning (as in SwAV \cite{caron2020unsupervised}). In this paper, we only explored learning image representations, but in future work SSL-HSIC can be extended to learning structure for $Y$ as well, building on existing work \cite{BG09:taxonomies,BZG:better-taxonomies}.

\section*{Broader impact}

Our work concentrates on providing a more theoretically grounded and interpretable loss function for self-supervised learning. A better understanding of self-supervised learning, especially through more interpretable learning dynamics, is likely to lead to better and more explicit control over societal biases of these algorithms. SSL-HSIC yields an alternative, clearer understanding of existing self-supervised methods. As such, it is unlikely that our method introduces further biases than those already present for self-supervised learning.

The broader impacts of the self-supervised learning framework is an area that has not been studied by the AI ethics community, but we think it calls for closer inspection. An important concern for fairness of ML algorithms is dataset bias. ImageNet is known for a number of problems such as offensive annotations, non-visual concepts and lack of diversity, in particular for underrepresented groups. Existing works and remedies typically focus on label bias. Since SSL doesn't use labels, however, the type and degree of bias could be very different from that of supervised learning. To mitigate the risk of dataset bias, one could employ dataset re-balancing to correct sampling bias \cite{yang2020towards} or completely exclude human images from the dataset while achieving the same performance \cite{asano2021pass}.

A new topic to investigate for self-supervised learning is how the bias/unbiased representation could be transferred to downstream tasks. We are not aware of any work in this direction. Another area of concern is security and robustness. Compared to supervised learning, self-supervised learning typically involves more intensive data augmentation such as color jittering, brightness adjustment, etc. There is some initial evidence suggesting self-supervised learning improves model robustness \cite{hendrycks2019using}. However, since data augmentation can either be beneficial \cite{sablayrolles2019white} or detrimental \cite{yu2021does} depending on the type of adversarial attacks, more studies are needed to assess its role for self-supervised learning.

\begin{ack}
The authors would like to thank Olivier J. H\'enaff for valuable feedback on the manuscript and the help for evaluating object detection task. We thank Aaron Van den Oord and Oriol Vinyals for providing valuable feedback on the manuscript. We are grateful to Yonglong Tian, Ting Chen and the BYOL authors for the help with reproducing baselines and evaluating downstream tasks. 

This work  was  supported  by  DeepMind, the Gatsby  Charitable Foundation, the Wellcome Trust, NSERC, and the Canada CIFAR AI Chairs program.
\end{ack}

\printbibliography

\clearpage %
\appendix
\renewcommand*\appendixpagename{\Large Appendices}
\appendixpage
\addappheadtotoc

\section{HSIC estimation in the self-supervised setting}
\label{app:estimator}
Estimators of HSIC typically assume i.i.d. data, which is not the case for self-supervised learning -- the positive examples are not independent. Here we show how to adapt our estimators to the self-supervision setting. 

\subsection{Exact form of HSIC(Z, Y)}

Starting with $\hsic(Z, Y)$, we assume that the ``label'' $y$ is a one-hot encoding of the data point, and all $N$ data points are sampled with the same probability $1/N$. With a one-hot encoding, any kernel that is a function of $y_i\trans y_j$ or $\|y_i - y_j\|$ (e.g. linear, Gaussian or IMQ) have the form
\begin{equation}
    l(y_i, y_j) = \begin{cases}
       l_1 & y_i = y_j,\\
       l_0 & \mathrm{otherwise} 
    \end{cases}
    \equiv \Delta l\,\ind(y_i=y_j) + l_0
    \label{app:eq:y_kernel_form}
\end{equation}
for some $\Delta l = l_1 - l_0$. 

\begin{theorem}
\label{app:th:hsic_z_y}
For a dataset with $N$ original images sampled with probability $1/N$, and a kernel over image identities defined as in \eqref{app:eq:y_kernel_form}, $\hsic(Z, Y)$ takes the form
\begin{equation}
    \hsic(Z, Y) =\frac{\Delta l}{N}\,\expect_{Z,Z'\sim \mathrm{pos}}\,\left[k(Z, Z')\right] - \frac{\Delta l}{N}\expect\, \left[k(Z, Z')\right]\,,
    \label{app:eq:hsic_z_y}
\end{equation}
where $Z,Z'\sim \mathrm{pos}$ means $p_{\mathrm{pos}}(Z, Z')=\sum_i p(i)p(Z|i)p(Z'|i)$ for image probability $p(i)=1/N$.
\end{theorem}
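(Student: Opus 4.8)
The plan is to start from the population formula for HSIC with one-hot labels and simplify each of the three terms in \eqref{eq:hsic-pop} using the special structure \eqref{app:eq:y_kernel_form} of the label kernel. Write $l(Y,Y') = \Delta l\,\ind(Y=Y') + l_0$, and recall that $Y$ ranges over the $N$ image identities, each with probability $1/N$, so that $\expect[\ind(Y=Y')] = 1/N$ when $Y,Y'$ are independent copies. The key observation is that conditioning on $Y=Y'$ is exactly conditioning on $Z,Z'$ being a positive pair: $p(Z,Z'\mid Y=Y') = p_{\mathrm{pos}}(Z,Z')$. The constant offset $l_0$ will cancel because HSIC is invariant to adding a constant to either kernel (the centering kills it); concretely, I will carry $l_0$ through and watch it drop out.

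First I would expand the first term $\expect[k(Z,Z') l(Y,Y')]$. Substituting the label-kernel form gives $\Delta l\,\expect[k(Z,Z')\ind(Y=Y')] + l_0\,\expect[k(Z,Z')]$. The first piece is $\Delta l \cdot \Pr(Y=Y') \cdot \expect[k(Z,Z')\mid Y=Y'] = \frac{\Delta l}{N}\,\expect_{Z,Z'\sim\mathrm{pos}}[k(Z,Z')]$ (using $\Pr(Y=Y')=1/N$). The second term $-2\expect[k(Z,Z') l(Y,Y'')]$ where $(Z',Y'),(Z'',Y'')$ are independent copies: here $Y$ and $Y''$ are independent, so again $\expect[\ind(Y=Y'')]=1/N$, but now $Z'$ (paired with $Y'$, an independent copy not appearing) is independent of everything relevant — actually I should be careful: the term is $k(X,X')l(Y,Y'')$, so $X'$ comes from the copy $(X',Y')$ and $Y''$ from $(X'',Y'')$. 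Since $l(Y,Y'')$ depends only on identities $Y,Y''$, and these are independent of each other, $\expect[k(Z,Z')l(Y,Y'')] = \Delta l\,\expect[k(Z,Z')\ind(Y=Y'')] + l_0\,\expect[k(Z,Z')]$; and $\expect[k(Z,Z')\ind(Y=Y'')]$ — here $Z' $ is tied to identity $Y'$ which is yet another independent copy, so the indicator $\ind(Y=Y'')$ is independent of the pair $(Z,Z')$, giving $\frac{1}{N}\expect[k(Z,Z')]$. The third term $\expect[k(Z,Z')]\expect[l(Y,Y')] = \expect[k(Z,Z')](\frac{\Delta l}{N} + l_0)$.

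Adding the three contributions: the $l_0\,\expect[k(Z,Z')]$ pieces are $+l_0\expect[k] - 2 l_0 \expect[k] + l_0\expect[k] = 0$, so $l_0$ indeed cancels. The remaining $\Delta l$ terms give $\frac{\Delta l}{N}\expect_{\mathrm{pos}}[k(Z,Z')] - 2\cdot\frac{\Delta l}{N}\expect[k(Z,Z')] + \frac{\Delta l}{N}\expect[k(Z,Z')] = \frac{\Delta l}{N}\expect_{\mathrm{pos}}[k(Z,Z')] - \frac{\Delta l}{N}\expect[k(Z,Z')]$, which is exactly \eqref{app:eq:hsic_z_y}.

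The main obstacle, and the step I would write out most carefully, is bookkeeping the independence structure among the copies $(Z,Y)$, $(Z',Y')$, $(Z'',Y'')$ in the cross term: one must check precisely which variables the indicator $\ind(Y=Y'')$ is independent of, since $Z'$ is attached to the "unused" identity $Y'$ and hence is independent of the event $\{Y=Y''\}$, whereas $Z$ is attached to $Y$ and $Z''$ to $Y''$. Getting this right is what makes the factor $\frac1N$ (rather than something involving conditional positive-pair distributions) appear in the middle term, and is the crux of the computation; everything else is substitution and collecting terms.
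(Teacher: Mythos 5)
Your proposal is correct and follows essentially the same route as the paper: expand the three terms of the population HSIC using the label-kernel decomposition $l(Y,Y') = \Delta l\,\ind(Y=Y') + l_0$, identify $\expect[k(Z,Z')\ind(Y=Y')] = \tfrac1N\expect_{\mathrm{pos}}[k]$, use the independence of $Y''$ to reduce the middle term to $(\tfrac{\Delta l}{N}+l_0)\expect[k]$, and observe that the $l_0$ contributions cancel. The bookkeeping point you flag about which variables the indicator $\ind(Y=Y'')$ is independent of is indeed the crux, and your resolution matches the paper's.
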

\begin{proof}
We compute HSIC (defined in \eqref{eq:hsic-pop}) term by term. Starting from the first, and denoting independent copies of $Z, Y$ with $Z', Y'$,
\begin{align*}
    \expect\,\left[k(Z, Z') l(Y, Y')\right] \,&= \Delta l\,\expect\,\left[ k(Z, Z')\ind[Y=Y']\right] + l_0\,\expect\, \left[k(Z, Z')\right]\\
    &=\Delta l\,\sum_{i=1}^N\sum_{j=1}^N\expect_{Z|y_i,Z'|y_j}\, \left[ \frac{1}{N^2} k(Z, Z')\ind[y_i=y_j] \right]+ l_0\,\expect\, \left[k(Z, Z')\right]\\
    &=\frac{\Delta l}{N}\,\sum_{i=1}^N\expect_{Z|y_i,Z'|y_i}\,  \left[\frac{1}{N} k(Z, Z')\right] + l_0\,\expect\, \left[k(Z, Z')\right]\\
    &=\frac{\Delta l}{N}\,\expect_{Z,Z'\sim\mathrm{pos}}\, \left[ k(Z, Z')\right] + l_0\,\expect\,\left[ k(Z, Z') \right]\,,
\end{align*}
where $\expect_{Z,Z'\sim \mathrm{pos}}$ is the expectation over positive examples (with $Z$ and $Z'$ are sampled independently conditioned on the ``label'').

The second term, due to the independence between $Z'$ and $Y''$, becomes
\begin{align*}
    \expect\,& \left[k(Z, Z') l(Y, Y'')\right]=\expect_{Z Y}\expect_{Z'} \left[k(Z, Z') \brackets{\frac{\Delta l}{N} + l_0}\right] =\brackets{\frac{\Delta l}{N} + l_0} \expect\, \left[k(Z, Z')\right]\,.
\end{align*}
And the last term becomes identical to the second one,
\begin{align*}
    \expect\,\left[k(Z, Z')\right]  \expect\,\left[l(Y, Y')\right]=\brackets{\frac{\Delta l}{N} + l_0} \expect\, \left[ k(Z, Z') \right]\,.
\end{align*}

Therefore, we can write $\hsic(Z, Y)$ as
\begin{align*}
    \hsic(Z, Y) \,&= \expect\,\left[k(Z, Z') l(Y, Y')\right] -2\,\expect\, \left[k(Z, Z') l(Y, Y'')\right] + \expect\,\left[k(Z, Z')\right]  \expect\,\left[l(Y, Y')\right]\\
    &=\frac{\Delta l}{N}\,\expect_{Z,Z'\sim \mathrm{pos}}\,\left[k(Z, Z')\right] - \frac{\Delta l}{N}\expect\, \left[k(Z, Z')\right]\,,
\end{align*}
as the terms proportional to $l_0$ cancel each other out. 
\end{proof}

The final form of $\hsic(Z, Y)$ shows that the $Y$ kernel and dataset size come in only as pre-factors. To make the term independent of the dataset size (as long as it is finite), we can assume $\Delta l=N$, such that
\begin{align*}
    \hsic(Z, Y) = \expect_{Z,Z'\sim \mathrm{pos}}\,\left[k(Z, Z')\right] - \expect\, \left[ k(Z, Z') \right] \,.
\end{align*}

\subsection{Estimator of HSIC(Z, Y)}
\begin{theorem}
\label{app:th:hsic_z_y_estimator}
In the assumptions of \cref{app:th:hsic_z_y}, additionally scale the $Y$ kernel to have $\Delta l = N$, and the $Z$ kernel to be $k(z, z)=1$. Assume that the batch is sampled as follows: $B < N$ original images are sampled without replacement, and for each image $M$ positive examples are sampled independently (i.e., the standard sampling scheme in self-supervised learning). Then denoting each data point $z_{i}^{p}$ for ``label'' $i$ and positive example $p$, 
\begin{align}
    \label{app:eq:hsic_z_y_unbiased}
    \widehat\hsic(Z, Y)\,&= \brackets{\frac{M}{M-1} + \frac{N-1}{N(B-1)}-\frac{M}{N(M-1)}}\frac{1}{BM^2}\sum_{ipl} k(z_{i}^p, z_{i}^l)  \\
    &- \frac{B(N-1)}{(B-1)N}\frac{1}{B^2M^2}\sum_{ijpl} k(z_{i}^p, z_{j}^l) - \frac{N-1}{N(M-1)}\,.
\end{align}
is an unbiased estimator of \eqref{app:eq:hsic_z_y}.
\end{theorem}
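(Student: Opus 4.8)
The plan is to reduce the statement to computing the expectations of two elementary batch statistics and then solving a small linear system. By \cref{app:th:hsic_z_y} with $\Delta l = N$ and the normalization $k(z,z)=1$, we have $\hsic(Z,Y) = A - C$, where $A := \expect_{Z,Z'\sim\mathrm{pos}}[k(Z,Z')]$ and $C := \expect[k(Z,Z')]$ with $Z,Z'$ independent copies. Writing $g(a,b) := \expect_{Z\mid a,\,Z'\mid b}[k(Z,Z')]$ for the conditional-on-identities kernel mean, note that $A = \tfrac1N\sum_a g(a,a)$ and $C = \tfrac1{N^2}\sum_{a,b} g(a,b)$, so that $\sum_{a\neq b} g(a,b) = N^2 C - N A$. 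This last identity is the mechanism that lets an ``all pairs'' batch statistic be expressed through $A$ and $C$.

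First I would introduce the two sums occurring in the estimator,
\begin{equation*}
S_1 := \frac1{BM^2}\sum_{i,p,l} k(z_i^p, z_i^l), \qquad S_2 := \frac1{B^2M^2}\sum_{i,j,p,l} k(z_i^p, z_j^l),
\end{equation*}
so that the claimed estimator is $\alpha S_1 - \beta S_2 - \delta$ with $\alpha = \tfrac{M}{M-1} + \tfrac{N-1}{N(B-1)} - \tfrac{M}{N(M-1)}$, $\beta = \tfrac{B(N-1)}{(B-1)N}$, and $\delta = \tfrac{N-1}{N(M-1)}$. It then suffices to show $\alpha\,\expect[S_1] - \beta\,\expect[S_2] - \delta = A - C$.

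Computing $\expect[S_1]$: split the sum over $(p,l)$ into the $M$ diagonal terms (each equal to $1$ by normalization) and the $M(M-1)$ off-diagonal terms; for $p\neq l$ the two augmentations are conditionally independent given the image, and since sampling $B$ images without replacement leaves each image's marginal uniform on $[N]$, each off-diagonal term has mean $A$. Hence $\expect[S_1] = \tfrac1M + \tfrac{M-1}{M}A$. Computing $\expect[S_2]$: split the sum over batch positions $(i,j)$ and augmentations $(p,l)$ into $i=j,\,p=l$ (contributing $1$ each), $i=j,\,p\neq l$ (mean $A$ each), and $i\neq j$. For the last group the key combinatorial input is that under sampling without replacement the underlying images $(I_i, I_j)$ at two distinct batch positions are uniform over ordered pairs of distinct elements of $[N]$, so each such term has mean $\tfrac1{N(N-1)}\sum_{a\neq b}g(a,b) = \tfrac{N^2C - NA}{N(N-1)}$; counting the $BM$, $BM(M-1)$, and $B(B-1)M^2$ terms gives $\expect[S_2] = \tfrac1{BM} + \tfrac{M-1}{BM}A + \tfrac{(B-1)N}{B(N-1)}C - \tfrac{(B-1)}{B(N-1)}A$.

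Finally I would substitute these into $\alpha\,\expect[S_1] - \beta\,\expect[S_2] - \delta$ and match coefficients of $C$, of $A$, and the constant term in that order: $\beta$ is forced by the $C$-coefficient (only $S_2$ carries $C$, and we need $-1$), then $\alpha$ is forced by the $A$-coefficient, and $\delta$ by the constant; checking that these three conditions are simultaneously met by exactly the stated $\alpha,\beta,\delta$ completes the proof of unbiasedness. The main obstacle is purely bookkeeping: getting the without-replacement pair probability $\tfrac{B(B-1)}{N(N-1)}$ right together with the resulting $N^2C - NA$ substitution, and then not dropping any term in the three-way coefficient match; there is no conceptual difficulty once $\expect[S_1]$ and $\expect[S_2]$ are in hand.
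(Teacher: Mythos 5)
Your proof is correct and follows essentially the same route as the paper: you compute $\expect[S_1]$ and $\expect[S_2]$ by the same diagonal/off-diagonal split, use the identical without-replacement calculation for the $i\neq j$ terms (the paper's Eq.~\eqref{app:eq:expect_i_not_j}), and then verify the coefficients; the paper simply states that combining the two expectations gives the result, whereas you make the final linear-algebra bookkeeping explicit via coefficient matching. No substantive difference.
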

While we assumed that $k(z, z)=1$ for simplicity, any change in the scaling would only affect the constant term (which is irrelevant for gradient-based learning).
Recalling that $\lvert k(z, z') \rvert \le \max( k(z, z), k(z', z') )$,
we can then obtain a slightly biased estimator from \cref{app:th:hsic_z_y_estimator} by simply discarding small terms:
\begin{corollary}
If $|k(z, z')| \leq 1$ for any $z, z'$, then
\begin{align}
    \label{app:eq:hsic_z_y_biased}
    \widehat\hsic(Z, Y)\,&= \frac{1}{BM(M-1)}\sum_{ipl} k(z_i^p, z_i^l)- \frac{1}{B^2M^2}\sum_{ijpl} k(z_i^p, z_j^l) - \frac{1}{M-1}
\end{align}
has a $O(1/B)$ bias.
\end{corollary}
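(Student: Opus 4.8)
The plan is to prove the bound by a direct term‑by‑term comparison of the biased estimator in \eqref{app:eq:hsic_z_y_biased} with the unbiased estimator \eqref{app:eq:hsic_z_y_unbiased} of \cref{app:th:hsic_z_y_estimator}, showing that the two differ by $O(1/B)$ for \emph{every} realization of the batch and therefore in expectation.

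First I would align the two expressions. Since $\frac{1}{BM(M-1)} = \frac{M}{M-1}\cdot\frac{1}{BM^2}$, the estimator \eqref{app:eq:hsic_z_y_biased} is exactly \eqref{app:eq:hsic_z_y_unbiased} with three coefficients replaced by their leading terms: the positive‑pair coefficient $\frac{M}{M-1} + \frac{N-1}{N(B-1)} - \frac{M}{N(M-1)}$ is replaced by $\frac{M}{M-1}$, the cross‑pair coefficient $\frac{B(N-1)}{(B-1)N}$ by $1$, and the additive constant $\frac{N-1}{N(M-1)}$ by $\frac{1}{M-1}$. Next I would bound the three discrepancies using $B < N$ and $M \ge 2$: one has $\bigl|\frac{N-1}{N(B-1)} - \frac{M}{N(M-1)}\bigr| \le \frac{1}{B-1} + \frac{2}{N} = O(1/B)$, $\frac{B(N-1)}{(B-1)N} - 1 = \frac{N-B}{(B-1)N} \le \frac{1}{B-1} = O(1/B)$, and $\frac{N-1}{N(M-1)} - \frac{1}{M-1} = -\frac{1}{N(M-1)}$, of magnitude $O(1/N) = O(1/B)$. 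The hypothesis $|k(z,z')| \le 1$ enters precisely here: it guarantees that the random empirical averages these coefficients multiply are $O(1)$, namely $\bigl|\frac{1}{BM^2}\sum_{ipl} k(z_i^p, z_i^l)\bigr| \le 1$ and $\bigl|\frac{1}{B^2M^2}\sum_{ijpl} k(z_i^p, z_j^l)\bigr| \le 1$, because the numbers of summands are at most $BM^2$ and $B^2M^2$ respectively. Combining via the triangle inequality shows that \eqref{app:eq:hsic_z_y_biased} and \eqref{app:eq:hsic_z_y_unbiased} differ pointwise by at most a constant (depending only on $M$) times $1/B$; taking expectations and using that \eqref{app:eq:hsic_z_y_unbiased} is unbiased for \eqref{app:eq:hsic_z_y} (\cref{app:th:hsic_z_y_estimator}) then yields the claimed $O(1/B)$ bias.

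I do not expect a substantial obstacle: the argument is essentially bookkeeping. The only points requiring care are verifying that each of the three dropped terms is genuinely $O(1/B)$ under the standing assumption $B < N$ (rather than non‑vanishing), and observing that boundedness of $k$ is what prevents the coefficient perturbations from being amplified by large empirical sums. It is also worth stating explicitly that the hidden constant depends on the number of positive views $M$, which is harmless since $M$ is a small fixed constant (typically $M = 2$) in the self‑supervised setting.
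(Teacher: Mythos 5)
Your proposal is correct and matches the paper's own (implicit) reasoning: the paper merely remarks that the biased estimator is obtained from the unbiased one in \cref{app:th:hsic_z_y_estimator} ``by simply discarding small terms,'' and your term-by-term comparison makes that precise, correctly identifying the three coefficient perturbations as $O(1/B)$ under $B < N$ and $M\ge 2$, and using $|k|\le 1$ to keep the empirical averages bounded so the perturbations are not amplified.
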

\begin{proof}[Proof of \cref{app:th:hsic_z_y_estimator}]
To derive an unbiased estimator, we first compute expectations of two sums: one over all positives examples (same $i$) and one over all data points.

Starting with the first,
\begin{align}
    \label{app:eq:hsic_z_y_unbiased_sum_1}
    \expect\, \left[\frac{1}{BM^2}\sum_{ipl} k(z_i^p, z_i^l)\right] \,&= \expect\, \left[\frac{1}{BM^2}\sum_{ip,l\neq p} k(z_i^p, z_i^l)\right] + \expect\, \left[\frac{1}{BM^2}\sum_{ip} k(z_i^p, z_i^p)\right]\\
    &=\frac{M-1}{M}\expect_{Z,Z'\sim \mathrm{pos}}\,\left[k(Z, Z')\right] + \frac{1}{M}\,.
\end{align}
As for the second sum,
\begin{align*}
    \expect\,\left[\frac{1}{B^2M^2}\sum_{ijpl} k(z_i^p, z_j^l) \right]\,&=  \expect\, \left[\frac{1}{B^2M^2}\sum_{i,j\neq i,pl} k(z_i^p, z_j^l)\right] + \expect\, \left[\frac{1}{B^2M^2}\sum_{ipl} k(z_i^p, z_i^l)\right]\,.
\end{align*}

The first term is tricky: $\expect\, \left[k(z_i^p, z_j^l)\right] \neq \expect\, \left[k(Z, Z')\right]$ because we sample without replacement. 
But we know that $p(y,y')=p(y)p(y'|y)=1/(N(N-1))$, therefore for $i\neq j$
\begin{align}
    \label{app:eq:expect_i_not_j}
    \expect\, k(z_i^p, z_j^l) \,& = 
    \sum_{y, y'\neq y} \frac{1}{N(N-1)}\expect_{Z|y, Z'|y'}\,k(Z, Z')\\
    &=\sum_{yy'} \frac{1}{N(N-1)}\expect_{Z|y, Z'|y'}\,k(Z, Z') - \sum_{y} \frac{1}{N(N-1)}\expect_{Z|y, Z'|y}k(Z, Z')\\
    &=\frac{N}{N-1}\expect\, k(Z, Z') - \frac{1}{N-1}\expect_{Z,Z'\sim \mathrm{pos}}k(Z, Z')\,.
\end{align}

Using the expectations for $ipl$ and $ijpl$,
\begin{align}
    \label{app:eq:hsic_z_y_unbiased_sum_2}
    \expect\, &\frac{1}{B^2M^2}\sum_{ijpl} k(z_i^p, z_j^l) = \expect\, \frac{1}{B^2M^2}\sum_{i,j\neq i,pl} k(z_i^p, z_j^l) + \expect\,\frac{1}{B^2M^2}\sum_{ipl} k(z_i^p, z_j^l)\\
    &=\frac{B-1}{B(N-1)}\brackets{N\,\expect\, k(Z,Z') - \expect_{Z,Z'\sim \mathrm{pos}}\,k(Z, Z')} + \frac{M - 1}{BM}\expect_{Z,Z'\sim \mathrm{pos}}\,k(Z, Z') + \frac{1}{BM}\\
    &=\frac{(B-1)N}{B(N-1)}\expect\, k(Z, Z')- \frac{B-1}{B(N-1)} \expect_{Z,Z'\sim \mathrm{pos}}\,k(Z, Z')+ \frac{M-1}{BM}\expect_{Z,Z'\sim \mathrm{pos}}k(Z, Z') + \frac{1}{BM}\\
    &=\frac{(B-1)N}{B(N-1)}\expect\, k(Z, Z') + \frac{1}{B}\brackets{\frac{M-1}{M} - \frac{B-1}{N-1}}\expect_{Z,Z'\sim \mathrm{pos}}\,k(Z, Z') + \frac{1}{BM}\,.
\end{align}

Combining \eqref{app:eq:hsic_z_y_unbiased_sum_1} and \eqref{app:eq:hsic_z_y_unbiased_sum_2} shows that \eqref{app:eq:hsic_z_y_unbiased} is indeed an unbiased estimator.
\end{proof}

It's worth noting that the i.i.d. estimator \eqref{eq:biased-hsic} is flawed for $\hsic(Z,Y)$ for two reasons: first, it misses the $1/N$ scaling of $\hsic(Z,Y)$ (however, it's easy to fix by rescaling); second, it misses the $1/(M(M-1))$ correction for the $ipl$ sum. As we typically have $M=2$, the latter would result in a large bias for the (scaled) i.i.d. estimator.

\subsection{Estimator of HSIC(Z, Z)}
Before discussing estimators of $\hsic(Z,Z)$, note that it takes the following form:
\begin{align*}
    \hsic(Z,Z) = \expect\, \left[k(Z, Z')^2\right] - 2 \expect_Z\, \left[\expect_{Z'}\, \left[k(Z, Z')\right]^2\right] + \brackets{\expect\, \left[k(Z, Z')\right]}^2\,.
\end{align*}
This is because $X$ and $Y$ in $\hsic(X,Y)$ become the \textit{same} random variable, so $p(X,Y)=p_Z(X)\delta(X-Y)$ (see \cite{pogodin2020kernelized}, Appendix A).

\begin{theorem}
    \label{app:th:hsic_z_z_estimator}
    Assuming $k(z,z')\leq 1$ for any $z, z'$, the i.i.d. HSIC estimator by \citet{gretton2005measuring},
    \begin{align*}
        \widehat\hsic(Z, Z) = \frac{1}{(BM-1)^2} \tr(KHKH)\,,
    \end{align*}
    where $H = I - \frac{1}{BM} \bb 1 \bb 1\trans$, has a $O(1/B)$ bias for the self-supervised sampling scheme.
\end{theorem}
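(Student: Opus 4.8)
The plan is to reduce the statement to the classical i.i.d.\ analysis of the biased HSIC estimator by tracking exactly which terms change under the self-supervised sampling scheme. Write $n = BM$. Since $H = I - \frac1n\bb{1}\bb{1}\trans$ is symmetric and idempotent, $HKH$ is symmetric and $\tr(KHKH) = \lVert HKH\rVert_{\mathit{HS}}^2$; expanding $HKH$ and collecting terms gives the standard identity
\[ \tr(KHKH) = \sum_{a,b} k_{ab}^2 \;-\; \frac{2}{n}\sum_{a,b,c} k_{ab}k_{bc} \;+\; \frac1{n^2}\sum_{a,b,c,d} k_{ab}k_{cd}, \]
where all indices range over the $n$ data points $\{z_i^p\}$. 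Dividing by $(n-1)^2$, the three sums carry $n^2$, $n^3$, $n^4$ terms against prefactors $1$, $1/n$, $1/n^2$, so each contributes $O(1)$; I want to show they converge, up to $O(1/B)$, to $\expect[k(Z,Z')^2]$, $2\,\expect_Z[\expect_{Z'}[k(Z,Z')]^2]$ and $(\expect[k(Z,Z')])^2$ respectively, which by the population identity for $\hsic(Z,Z)$ recalled just before the theorem is exactly $\hsic(Z,Z)$.

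The second step is to take expectations term by term, grouping terms by the coincidence pattern of the data indices and, separately, by which indices share a source image. In the ``generic'' pattern — all indices distinct and lying in distinct source images — the conditional independence of augmentations across images yields the population integrands, up to the without-replacement correction: precisely as in \eqref{app:eq:expect_i_not_j}, for distinct images $\expect[k(z_i^p,z_j^l)^2] = \frac{N}{N-1}\expect[k(Z,Z')^2] - \frac1{N-1}\expect_{\mathrm{pos}}[k(Z,Z')^2] = \expect[k(Z,Z')^2] + O(1/N)$, and similarly the three- and four-index products reduce to $\expect_Z[\expect_{Z'}[k(Z,Z')]^2]$ and $(\expect[k(Z,Z')])^2$ plus $O(1/N)$ coming from inclusion–exclusion over distinct labels. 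The number of generic tuples is a falling factorial in $B$ times $M$ raised to the arity, and a short computation shows this count times the prefactor divided by $(n-1)^2$ equals $1 + O(1/B)$ for each of the three sums, so the generic parts already assemble to $\hsic(Z,Z)$ up to $O(1/B)$.

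The third step bounds the residual configurations. Any configuration with a coincident data index, or with two or more indices sharing a source image, contributes $O(n^2)$, $O(B^2M^3)$, $O(B^3M^4)$ terms to the first, second and third sums respectively; multiplying by the prefactor, dividing by $(n-1)^2$, and using $\lvert k\rvert \le 1$, each such block is $O(1/B)$. Since $B \le N$, every $O(1/N)$ correction above is also $O(1/B)$. Summing the three pieces with signs $+1, -2, +1$ yields $\expect[(n-1)^{-2}\tr(KHKH)] = \hsic(Z,Z) + O(1/B)$, which is the claim.

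I expect the main obstacle to be the bookkeeping: enumerating the index-coincidence patterns and certifying that every ``non-generic'' pattern genuinely carries weight $O(1/B)$. The delicate case is ``exactly two indices share an image, the rest distinct'' — the self-supervised analogue of the $1/(M-1)$ correction that was essential for the $\hsic(Z,Y)$ estimator (\cref{app:th:hsic_z_y_estimator}); the point to emphasize is that here it comes with an extra factor of order $1/n$ inherited from the structure of $\tr(KHKH)$, so, unlike for $\hsic(Z,Y)$, no correction to the estimator is needed and the naive i.i.d.\ estimator already has $O(1/B)$ bias.
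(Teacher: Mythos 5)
Your proposal is correct and takes essentially the same route as the paper's proof: both expand $\tr(KHKH)$ into the three sums $\tr(KK)$, $\bb 1\trans KK \bb 1$, $(\bb 1\trans K \bb 1)^2$, take expectations by splitting into "distinct source image" vs.\ "coincident" configurations, invoke the same without-replacement correction \eqref{app:eq:expect_i_not_j}, and use the $(BM-1)^{-2}$ prefactor to absorb the residual configurations into $O(1/B)$. Your concluding observation — that the same-image terms, which forced the $1/(M-1)$ correction in the $\hsic(Z,Y)$ estimator, are here suppressed by an extra $1/n$ from the normalization and so carry only $O(1/B)$ weight — is exactly the reason the naive i.i.d.\ estimator suffices, and it is implicit in the paper's $O(BM^2)$, $O(B^2M^3)$, $O(B^3M^4)$ remainder bounds.
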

\begin{proof}
    First, observe that
    \begin{align*}
        \tr(KHKH) \,& = \tr(KK) - \frac{2}{BM}\bb 1\trans K K \bb 1 + \frac{1}{B^2M^2}\brackets{\bb 1\trans K\bb 1}^2\,.
    \end{align*}
    Starting with the first term, and using again the result of \eqref{app:eq:expect_i_not_j} for sampling without replacement,
    \begin{align*}
        \expect\, \left[ \tr(KK) \right]\,&= \expect\, \left[\sum_{ijpl}k(z_{i}^p,z_{j}^l)^2\right] = \expect \left[\sum_{i,j\neq i,pl}k(z_{i}^p,z_{j}^l)^2\right] + \expect\, \left[\sum_{ipl}k(z_{i}^p,z_{j}^l)^2\right]\\
        &=\frac{B(B-1)M^2}{N-1}\brackets{N\,\expect\, k(Z, Z')^2 - \expect_{Z,Z'\sim \mathrm{pos}}k(Z, Z')^2} + \expect\, \sum_{ipl}k(z_{i}^p,z_{j}^l)^2\\
        &=B^2M^2\,\expect\, k(Z, Z')^2 + O(BM^2)\,.
    \end{align*}
    Similarly, the expectation of the second term is
    \begin{align*}
        \expect\, \bb 1\trans K K \bb 1 \,&= \expect\,\sum_{ijq pld}k(z_i^p, z_q^d)k(z_j^l, z_q^d)\\
        &= \expect\,\sum_{i,j\neq i,q\neq \{i,j\}, pld}k(z_i^p, z_q^d)k(z_j^l, z_q^d) + O(B^2M^3)\,.
    \end{align*}
    Here we again need to take sampling without replacement into account, and again it will produce a very small correcton term. For $i\neq j\neq q$, repeating the calculation in \eqref{app:eq:expect_i_not_j},
    \begin{align*}
        \expect\, k(z_i^p, z_j^l)k(z_i^p, z_q^d) \,& = 
    \sum_{y, y'\neq y, y''\neq \{y, y'\}} \frac{1}{N(N-1)(N-2)}\expect_{Z|y, Z'|y', Z''|Y''}\,k(Z, Z')k(Z, Z'')\\
    &=\expect\, k(Z, Z')k(Z, Z'') + O(1/N)\,.
    \end{align*}
    As $B<N$, we obtain that
    \begin{align*}
        \expect\, \bb 1\trans K K \bb 1 = B(B-1)(B-2)M^3\, \expect\, k(Z, Z')k(Z, Z'') + O(B^2M^3)\,.
    \end{align*}
    
    Finally, repeating the same argument for sampling without replacement,
    \begin{align*}
        \expect\,\brackets{\bb 1\trans K\bb 1}^2 \,&= \expect\sum_{ijqr pldf} k(z_i^p, z_j^l)k(z_q^d, z_r^f)\\
        &=\expect\sum_{i,j\neq i,q\neq \{i, j\},r\neq\{i,j,q\}, pldf} k(z_i^p, z_j^l)k(z_q^d, z_r^f) + O(B^3M^4)\\
        &=B(B-1)(B-2)(B-3)M^4\,\expect\,k(Z, Z') k(Z'', Z''') + O(B^3M^4)\,.
    \end{align*}
    
    Combining all terms together, and expressing $B(B-1)$ (and similar) terms in big-O notation,
    \begin{align*}
        \expect\frac{\tr(KHKH)}{(BM-1)^2} \,& =\expect\brackets{k(Z, Z')^2 - 2\, k(Z, Z')k(Z, Z'') +  k(Z, Z')k(Z'', Z'''')} + O\brackets{\frac{1}{B}}\\
        &=\expect\, k(Z, Z')^2 - 2 \expect_Z\, \brackets{\expect_{Z'}\, k(Z, Z')}^2 + \brackets{\expect\, k(Z, Z')}^2 + O\brackets{\frac{1}{B}}\\
        &= \hsic(Z,Z)+ O\brackets{\frac{1}{B}}\,. 
    \end{align*}
\end{proof}

Essentially, having $M$ positive examples for the batch size of $BM$ changes the bias from $O(1/(BM))$ (i.i.d. case) to $O(1/B)$. Finally, note that even if $\widehat \hsic(Z, Z)$ is unbiased, its square root is not. 

\section{Theoretical properties of SSL-HSIC}
\label{app:section:theory}
\subsection{InfoNCE connection}
\label{app:section:infonce_connection}
To establish the connection with InfoNCE, define it in terms of expectations:
\begin{equation}
    \mathcal{L}_{\mathrm{InfoNCE}}(\theta)=\expect_{Z} \left[ \log\expect_{Z'} \left[ \exp\brackets{k(Z, Z')}\right] \right] - \expect_{Z,Z'\sim\mathrm{pos}} \left[k(Z,Z') \right]\,.
    \label{app:eq:infonce}
\end{equation}

To clarify the reasoning in the main text, we can Taylor expand the exponent in \eqref{app:eq:infonce} around $\mu_1\equiv\expect_{Z'}\,\left[k(Z, Z')\right]$. For $k(Z,Z')\approx \mu_1$,
\begin{align*}
    & \expect_{Z}\left[\log\expect_{Z'} \left[\exp\brackets{k(Z,Z')}\right]\right] \\ \,&\approx\expect_{Z}\left[\mu_1\right]+\expect_{Z}\left[\log\expect_{Z'}\left[1 + k(Z,Z') - \mu_1 + \frac{(k(Z,Z') - \mu_1)^2}{2} \right]\right]\\
    &=\expect_{Z}\left[\mu_1\right]+\expect_{Z}\left[\log\expect_{Z'}\left[1 + \frac{(k(Z,Z') - \mu_1)^2}{2} \right]\right]\,.
\end{align*}
Now expanding $\log(1+x)$ around zero,
\begin{align*}
    \expect_{Z}\left[\log\expect_{Z'} \left[\exp\brackets{k(Z,Z')}\right]\right] \,&\approx \expect_{Z}\left[\mu_1\right] + \expect_{Z}\expect_{Z'}\left[\frac{(k(Z,Z') - \mu_1)^2}{2}\right]\\
    &=\expect_{Z}\expect_{Z'} \left[k(Z,Z') \right] +\frac{1}{2} \expect_{Z} \left[\var_{Z'}\left[k(Z,Z')\right] \right]\,.
\end{align*}

The approximate equality relates to expectations over higher order moments, which are dropped. The expression
 gives the required intuition behind the loss, however: when the variance in $k(Z,Z')$ w.r.t. $Z'$ is small, InfoNCE combines $-\hsic(Z,Y)$ and a variance-based penalty. In general, we can always write (assuming $\Delta l=N$ in $\hsic(Z, Y)$ as before)
\begin{equation}
    \mathcal{L}_{\mathrm{InfoNCE}}(\theta) = -\hsic(Z, Y) + \expect_{Z}\left[\log\expect_{Z'}\left[\exp\brackets{k(Z,Z') - \mu_1}\right]\right]\,.
    \label{app:eq:infonce_hsic_formulation}
\end{equation}

In the small variance regime, InfoNCE also bounds an HSIC-based loss. To show this, we will need a bound on $\exp(x)$:
\begin{lemma}
    \label{app:lemma:exp_bound}
    For $0 < \alpha \leq 1/4$ and $x \geq - \brackets{1 + \sqrt{1-4\alpha}} / (2\alpha)$,
    \begin{equation}
        \label{app:eq:exp_bound}
        \exp(x) \geq 1 + x + \alpha\, x^2 \,.
    \end{equation}
\end{lemma}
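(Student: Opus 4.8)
The statement to prove is \cref{app:lemma:exp_bound}: for $0 < \alpha \leq 1/4$ and $x \geq -\brackets{1 + \sqrt{1-4\alpha}}/(2\alpha)$, we have $\exp(x) \geq 1 + x + \alpha x^2$.

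\textbf{Proof plan.} The plan is to define $g(x) = \exp(x) - 1 - x - \alpha x^2$ and show $g(x) \geq 0$ on the stated half-line. First I would record the endpoint: let $x_- = -\brackets{1 + \sqrt{1-4\alpha}}/(2\alpha)$, which is the more negative root of the quadratic $\alpha x^2 + x + 1 = 0$ (the discriminant $1 - 4\alpha \geq 0$ is exactly why we need $\alpha \leq 1/4$; the other root is $x_+ = -\brackets{1 - \sqrt{1-4\alpha}}/(2\alpha)$, with $x_- \le x_+ < 0$). The key observation is that for $x \ge 0$ the bound is immediate, since $\exp(x) = \sum_{n\ge 0} x^n/n! \ge 1 + x + x^2/2 \ge 1 + x + \alpha x^2$ using $\alpha \le 1/4 \le 1/2$. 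So the real content is the range $x_- \le x < 0$.

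On $x_- \le x < 0$ I would argue via derivatives. Compute $g'(x) = \exp(x) - 1 - 2\alpha x$ and $g''(x) = \exp(x) - 2\alpha$. Since $2\alpha \le 1/2 < 1$, $g''$ has a unique zero at $x^* = \log(2\alpha) < 0$; $g'$ is decreasing on $(-\infty, x^*)$ and increasing on $(x^*, \infty)$, with $g'(0) = 0$. Hence on $[x_-, 0]$ the function $g$ is... one needs to track signs of $g'$ carefully: $g'(0)=0$ and $g'$ increasing near $0$ means $g' < 0$ just left of $0$, so $g$ is decreasing as $x\to 0^-$, i.e. $g(x) \ge g(0) = 0$ for $x$ slightly less than $0$. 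The cleanest route is probably: show $g$ has at most two zeros (since $g''$ changes sign once, $g'$ changes sign at most twice, so $g$ changes monotonicity at most twice, giving at most... hmm, actually $g$ can have at most 3 zeros by this count, but combined with $g(0)=0$ and convexity structure one pins it down). Alternatively, and more robustly, I would just verify directly that $g(x_-) \ge 0$ and that $g$ has no zero in the open interval $(x_-, 0)$ by a monotonicity/convexity argument, concluding $g \ge 0$ there.

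The main obstacle is handling the left endpoint and the sign analysis on the negative reals cleanly. A slicker alternative I'd try first: since at $x = x_-$ we have $\alpha x_-^2 + x_- + 1 = 0$, i.e. $1 + x_- + \alpha x_-^2 = 0 \le \exp(x_-)$, the inequality holds \emph{with room to spare} at the endpoint; and $1 + x + \alpha x^2$ is negative precisely on $(x_-, x_+)$ and nonnegative outside, while $\exp > 0$ everywhere. Therefore the inequality is automatic on $(x_-, x_+)$ (RHS negative, LHS positive) and on $[0,\infty)$ (shown above). The only region requiring real work is $[x_+, 0)$, a bounded interval on which the RHS is nonnegative. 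There I'd use the Taylor-with-remainder form $\exp(x) = 1 + x + x^2/2 + x^3 e^{\xi}/6$ for some $\xi$ between $0$ and $x$; for $x < 0$ the cubic remainder is negative, so this bound isn't directly enough, and instead I would use $\exp(x) \ge 1 + x + x^2/2 \cdot h(x)$ type estimates or simply note $x_+ \ge -2$ (since $x_+ = -2/(1+\sqrt{1-4\alpha}) \in [-2,-1]$... actually $x_+ \to -1$ as $\alpha \to 0$ and $x_+ = -2$ at $\alpha = 1/4$) and on the compact interval $[-2, 0]$ check that $\exp(x) - 1 - x - x^2/4 \ge 0$, which dominates $\exp(x) - 1 - x - \alpha x^2$ since $\alpha \le 1/4$ and $x^2 \ge 0$. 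That last one-variable inequality on $[-2,0]$ is elementary (its derivative analysis or a direct convexity check), so this is the step I'd expect to be the most fiddly but ultimately routine.
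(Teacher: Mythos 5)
Your plan is sound and, at its core, is the same decomposition the paper uses: the quadratic $1+x+\alpha x^2$ is negative on the open interval between its two roots, so the inequality is trivial there (since $\exp>0$), and because the right root satisfies $x_+\ge -2$ (as you correctly compute from $x_+=-2/(1+\sqrt{1-4\alpha})$), one only needs an explicit lower bound on $\exp(x)$ for $x\ge -2$. Where you diverge is in how that bound is closed out. The paper uses the odd Taylor truncation $\exp(x)\ge 1+x+\tfrac{x^2}{2}+\tfrac{x^3}{6}+\tfrac{x^4}{24}+\tfrac{x^5}{120}$ (valid for all $x$, since the remainder has sign $x^6\ge 0$), divides the excess by $x^2$, and observes that $\tfrac{x}{6}+\tfrac{x^2}{24}+\tfrac{x^3}{120}$ is monotone increasing with value $-7/30$ at $x=-2$, which exceeds $\alpha-\tfrac12\le-\tfrac14$; this handles $[-2,\infty)$ in one shot without splitting at $0$. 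You instead split at $0$, dispose of $[0,\infty)$ with $\exp(x)\ge 1+x+\tfrac{x^2}{2}$, and reduce $[-2,0]$ to the claim $\exp(x)\ge 1+x+\tfrac{x^2}{4}$ there — which you assert is ``routine'' but do not verify. It is in fact true (set $h(x)=\exp(x)-1-x-\tfrac{x^2}{4}$; then $h(0)=h'(0)=0$, $h(-2)=e^{-2}>0$, $h''$ vanishes once at $-\log 2$, and tracking the single sign change of $h'$ shows $h\ge 0$ on $[-2,0]$), so your proof would go through — but as written, the hardest step is left as an unverified assertion, and your first paragraph (the abandoned global-derivative attempt) should simply be cut. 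The paper's degree-5 bound is the cleaner way to finish and avoids the split at $0$ entirely.
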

\begin{proof}
    The quadratic equation $1 + x + \alpha\, x^2$ has two roots ($x_1 \leq x_2$):
    \begin{align*}
        x_{1,2} = \frac{\pm\sqrt{1-4\alpha} -1}{2\alpha}\,.
    \end{align*}
    Both roots are real, as $\alpha \leq 1/4$. Between $x_1$ and $x_2$, \eqref{app:eq:exp_bound} holds trivially as the rhs is negative. 
    
    For $x \geq -2$ and $\alpha \leq 1/4$,
    \begin{align*}
        \exp(x) \geq 1 + x + \frac{x^2}{2!} + \frac{x^3}{3!} + \frac{x^4}{4!} + \frac{x^5}{5!} \geq 1 + x + \alpha\, x^2\,.
    \end{align*}
    The first bound always holds; the second follows from 
    \begin{align*}
        \frac{x}{3!} + \frac{x^2}{4!} + \frac{x^3}{5!} \geq \alpha - \frac{1}{2}\,,
    \end{align*}
    as the lhs is monotonically increasing and equals $-7/30$ at $x=-2$. The rhs is always smaller than $-1/4 < -7/30$. As $x_2 \geq -2$ (due to $\alpha \leq 1/4$), \eqref{app:eq:exp_bound} holds for all $x \geq x_1$.
\end{proof}

We can now lower-bound InfoNCE:
\begin{theorem}
    \label{app:th:infonce_bound}
    Assuming that the kernel over $Z$ is bounded as $|k(z, z')| \leq k^{\mathrm{max}}$ for any $z,z'$, and the kernel over $Y$ satisfies $\Delta l=N$ (defined in \eqref{app:eq:y_kernel_form}). Then for $\gamma$ satisfying $\min\{-2, -2 k^{\mathrm{max}}\} = - \brackets{1 + \sqrt{1-4\gamma}} / (2\gamma)$,
    \begin{align*}
        &-\hsic(Z, Y)  +\gamma\, \hsic(Z,Z)\leq \mathcal{L}_{\mathrm{InfoNCE}}(\theta) - \expect_{Z}\frac{\brackets{\gamma\var_{Z'}\left[k(Z,Z')\right]}^2}{1 + \gamma\var_{Z'}\left[k(Z,Z')\right]}\,.
    \end{align*}
\end{theorem}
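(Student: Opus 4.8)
The plan is to start from the exact rewriting of InfoNCE already recorded in \eqref{app:eq:infonce_hsic_formulation}, namely \(\mathcal{L}_{\mathrm{InfoNCE}}(\theta) = -\hsic(Z,Y) + \expect_{Z}[\log\expect_{Z'}[\exp(k(Z,Z') - \mu_1)]]\) with \(\mu_1 = \mu_1(Z) := \expect_{Z'}[k(Z,Z')]\); this identity is exactly where the hypothesis \(\Delta l = N\) enters, via the exact form of \(\hsic(Z,Y)\) in \cref{app:th:hsic_z_y}. Hence it suffices to lower bound the ``variance regularizer'' \(\expect_{Z}[\log\expect_{Z'}[\exp(k(Z,Z') - \mu_1)]]\) by \(\gamma\,\hsic(Z,Z)\), up to the stated correction \(\expect_{Z}[(\gamma\,v)^2/(1+\gamma\,v)]\), where \(v = v(Z) := \var_{Z'}[k(Z,Z')]\).

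First I would apply \cref{app:lemma:exp_bound} with \(\alpha=\gamma\) to the centered variable \(W := k(Z,Z') - \mu_1(Z)\), for each fixed \(Z\). Two hypotheses need checking. (i) \(0 < \gamma \le 1/4\): the map \(\gamma \mapsto -(1+\sqrt{1-4\gamma})/(2\gamma)\) is a strictly decreasing bijection from \((0,1/4]\) onto \((-\infty,-2]\), and \(\min\{-2,-2k^{\mathrm{max}}\}\) lies in that interval, so the equation defining \(\gamma\) has a unique solution there. (ii) \(W \ge -(1+\sqrt{1-4\gamma})/(2\gamma)\): since \(|k|\le k^{\mathrm{max}}\) we also have \(|\mu_1|\le k^{\mathrm{max}}\), so \(W \ge -2k^{\mathrm{max}} \ge \min\{-2,-2k^{\mathrm{max}}\}\), which is precisely the left root. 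Thus \(\exp(W) \ge 1 + W + \gamma\,W^2\) pointwise; taking \(\expect_{Z'}\) and using \(\expect_{Z'}[W]=0\), \(\expect_{Z'}[W^2]=v(Z)\) gives \(\expect_{Z'}[\exp(W)] \ge 1 + \gamma\,v(Z) > 0\).

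Since \(\log\) is increasing, \(\log\expect_{Z'}[\exp(W)] \ge \log(1+\gamma\,v(Z))\), and I would feed this through the elementary scalar inequality \(\log(1+t) \ge t - t^2/(1+t)\), valid for all \(t\ge 0\) (the difference vanishes at \(t=0\) and has derivative \(t/(1+t)^2\ge 0\)); with \(t=\gamma\,v(Z)\) this yields \(\log\expect_{Z'}[\exp(W)] \ge \gamma\,v(Z) - (\gamma\,v(Z))^2/(1+\gamma\,v(Z))\). Finally I would invoke \(\expect_{Z}[v(Z)] \ge \hsic(Z,Z)\): the identity for \(\hsic(Z,Z)\) stated just before \cref{app:th:hsic_z_z_estimator} rearranges to \(\hsic(Z,Z) = \expect_{Z}[v(Z)] - \var_{Z}[\expect_{Z'}k(Z,Z')] \le \expect_{Z}[v(Z)]\). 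Taking \(\expect_{Z}\) of the previous display, combining with \eqref{app:eq:infonce_hsic_formulation}, and rearranging gives the theorem; note the correction term is \(O(\var^2)\), consistent with the \(o(\mathrm{variance})\) remark in the main text.

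The only genuinely delicate point is the bookkeeping around \cref{app:lemma:exp_bound}: one must line up the boundedness constant \(k^{\mathrm{max}}\) with the root condition that defines \(\gamma\) and confirm an admissible \(\gamma\in(0,1/4]\) exists. This is why the statement writes \(\min\{-2,-2k^{\mathrm{max}}\}\) rather than just \(-2k^{\mathrm{max}}\): the constraint \(\gamma\le 1/4\) in the lemma forces its left root to lie in \((-\infty,-2]\), so for \(k^{\mathrm{max}}<1\) one cannot match it to \(-2k^{\mathrm{max}}>-2\) and must instead take \(\gamma=1/4\). Everything else --- the two scalar estimates and the substitution back into \eqref{app:eq:infonce_hsic_formulation} --- is routine.
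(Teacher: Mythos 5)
Your proof is correct and follows essentially the same route as the paper: rewrite InfoNCE via \eqref{app:eq:infonce_hsic_formulation}, apply \cref{app:lemma:exp_bound} to the centered kernel with $\alpha=\gamma$, take $\expect_{Z'}$, use $\log(1+t)\ge t - t^2/(1+t)$, and finish with $\hsic(Z,Z)\le \expect_Z\var_{Z'}[k(Z,Z')]$. Your last step phrases this final inequality as the identity $\hsic(Z,Z)=\expect_Z[v(Z)]-\var_Z[\expect_{Z'}k(Z,Z')]$ rather than invoking Cauchy–Schwarz directly, but these are the same fact, and your explicit checks of the bijection $\gamma\mapsto -(1+\sqrt{1-4\gamma})/(2\gamma)$ and of $|\mu_1|\le k^{\mathrm{max}}$ are exactly the bookkeeping the paper's proof relies on.
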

\begin{proof}

As we assumed the kernel is bounded, for $k(Z,Z') - \mu_1 \geq -2k^{\mathrm{max}}$ (almost surely; the factor of 2 comes from centering by $\mu_1$). Now if we choose $\gamma$ that satisfies $\min\{-2, -2 k^{\mathrm{max}}\} = - \brackets{1 + \sqrt{1-4\gamma}} / (2\gamma)$ (the minimum is to apply our bound even for $k^{\mathrm{max}} < 1$), then (almost surely) by \cref{app:lemma:exp_bound},
\begin{align*}
    \exp\brackets{k(Z,Z') - \mu_1} \geq 1 + k(Z,Z') - \mu_1 + \gamma\, \brackets{k(Z,Z') - \mu_1}^2\,.
\end{align*}
Therefore, we can take the expectation w.r.t. $Z'$, and obtain
\begin{align*}
    \expect_{Z}\log\expect_{Z'}\exp\brackets{k(Z,Z') - \mu_1} \geq  \expect_{Z}\log\brackets{1 + \gamma\,\var_{Z'}\left[k(Z,Z')\right]}\,.
\end{align*}

Now we can use that $\log(1+x) \geq x / (1+x) = x  - x^2 / (1+x)$ for $x > -1$, resulting in 
\begin{align*}
    \expect_{Z}\log\expect_{Z'}\exp\brackets{k(Z,Z') - \mu_1} \,&\geq  \gamma\,\expect_{Z}\var_{Z'}\left[k(Z,Z')\right]+ \expect_{Z}\frac{\brackets{\gamma\,\var_{Z'}\left[k(Z,Z')\right]}^2}{1 + \gamma\,\var_{Z'}\left[k(Z,Z')\right]}\,.
\end{align*}

Using \eqref{app:eq:infonce_hsic_formulation}, we obtain that
\begin{align*}
\mathcal{L}_{\mathrm{InfoNCE}}(\theta) \,&= -\hsic(Z, Y) + \expect_{Z}\log\expect_{Z'}\exp\brackets{k(Z,Z') - \mu_1}\\
&\geq -\hsic(Z, Y)  +\gamma\,\expect_{Z}\var_{Z'}\left[k(Z,Z')\right] + \expect_{Z}\frac{\brackets{\gamma\var_{Z'}\left[k(Z,Z')\right]}^2}{1 + \gamma\var_{Z'}\left[k(Z,Z')\right]}\,.
\end{align*}

Finally, noting that by Cauchy-Schwarz 
\begin{align*}
    \hsic(Z, Z) \,&= \expect_{Z,Z'}k(Z,Z')^2 - 2\,\expect_{Z}\brackets{\expect_{Z'}k(Z,Z')}^2 + \brackets{\expect_{Z,Z'}k(Z,Z')}^2\\
    &\leq \expect_{Z,Z'}k(Z,Z')^2 - \expect_{Z}\brackets{\expect_{Z'}k(Z,Z')}^2 = \expect_{Z}\var_{Z'}\left[k(Z,Z')\right]\,,
\end{align*}
we get the desired bound.
\end{proof}

\cref{app:th:infonce_bound} works for any bounded kernel, because $- \brackets{1 + \sqrt{1-4\gamma}} / (2\gamma)$ takes values in $(\infty, -2]$ for $\gamma \in (0, 1/4]$. For inverse temperature-scaled cosine similarity kernel $k(z,z')=z\trans z' / (\tau\|z\|\|z'\|)$, we have $k^{\mathrm{max}}=1 / \tau$. For $\tau=0.1$ (used in SimCLR \cite{chen2020simple}), we get $\gamma=0.0475$.
For the Gaussian and the IMQ kernels, $k(z,z')\geq -\mu_1 \geq -1$, so we can replace $\gamma$ with $\frac{1}{3}$ due to the following inequality: for $x \geq a$,
\begin{align*}
    \exp(x) \geq 1 + x + \frac{x^2}{2} + \frac{x^3}{6} \geq 1 + x + \frac{a + 3}{6} x^2\,,
\end{align*}
where the first inequality is always true.

\subsection{MMD interpretation of HSIC(X,Y)}
\label{app:mmd}
The special label structure of the self-supervised setting allows to understand $\hsic(X, Y)$ in terms of the maximum mean discrepancy (MMD). Denoting labels as $i$ and $j$ and corresponding mean feature vectors (in the RKHS) as $\mu_i$ and $\mu_j$,
\begin{align*}
    \mmd^2(i, j) \,&= \|\mu_i-\mu_j\|^2 = \dotprod{\mu_i}{\mu_i} + \dotprod{\mu_j}{\mu_j} - 2\dotprod{\mu_i}{\mu_j}\,.
\end{align*}
Therefore, the average over all labels becomes
\begin{align*}
    \frac{1}{N^2}\sum_{ij}\mmd^2(i, j) \,&= \frac{2}{N}\sum_i\dotprod{\mu_i}{\mu_i} - \frac{2}{N^2}\sum_{ij}\dotprod{\mu_i}{\mu_j}\\
    &=\frac{2}{N}\sum_i\dotprod{\mu_i}{\mu_i} - \frac{2}{N^2}\dotprod{\sum_i \mu_i}{\sum_j \mu_j}\\
    &=2\expect_i\expect_{Z|i, Z'|i}\dotprod{\phi(Z)}{\phi(Z')} - 2\dotprod{\expect_i\expect_{Z|i} \phi(Z)}{\expect_j\expect_{Z'|j} \phi(Z')}\\
    &=2\expect_{Z,Z'\sim \mathrm{pos}}\left[k(Z,Z')\right] - 2\expect_{Z}\expect_{Z'} \left[k(Z,Z')\right]\,,
\end{align*}
where the second last line uses that all labels have the same probability $1/N$, and the last line takes the expectation out of the dot product and uses $k(Z,Z') = \dotprod{\phi(Z)}{\phi(Z')}$.

Therefore, 
\begin{align*}
    \frac{1}{2 N^2}\sum_{ij}\mmd^2(i, j) \,&= \frac{N}{\Delta l}\hsic(Z, Y)\,.
\end{align*}

\subsection{Centered representation assumption for clustering}
\label{app:section:features}
In \Cref{subsec:clustering}, we make the assumption that the features are centered and argue that the assumption is valid for BYOL. Here we show empirical evidence of centered features. First, we train BYOL for 1000 epochs which reaches a top-1 accuracy of 74.5\% similar to the reported result in \cite{grill2020bootstrap}. Next, we extract feature representations (predictor and target projector outputs followed by re-normalization) under training data augmentations for a batch of 4096 images. One sample Z-test is carried out on the feature representations with $H_0:\mu=0$ and $H_1:\mu \neq 0$. The null hypothesis is accepted under threshold $\alpha=0.025$.

\section{Random Fourier Features (RFF)}
\label{app:rff}

\subsection{Basics of RFF}
Random Fourier features were introduced by \citet{rahimi2007random} to reduce computational complexity of kernel methods. Briefly, for translation-invariant kernels $k(z-z')$ that satisfy $k(0)=1$, Bochner's theorem gives that
\begin{align*}
    k(z-z') = \int p(\omega)e^{i\omega\trans(z-z')} d^n\omega =\expect_{\omega} \left[e^{i\omega\trans z}\brackets{e^{i\omega\trans z'}}^*\right]\,,
\end{align*}
where the probability distribution $p(\omega)$ is the $n$-dimensional Fourier transform of $k(z-z')$.

As both the kernel and $p(\omega)$ are real-valued, we only need the real parts of the exponent. Therefore, for $b\sim \mathrm{Uniform}[0, 2\pi]$,
\begin{align*}
    k(z-z') = \expect_{\omega,b} \left[2\cos(\omega\trans z+b)\cos(\omega\trans z' + b)\right]\,.
\end{align*}

For $N$ data points, we can draw $D$ $\omega_d$ from $p(\omega)$, construct RFF for each points $z_i$, put them into matrix $R\in \RR^{N\times D}$, and approximate the kernel matrix as 
\begin{align*}
    K \approx R R\trans,\ R_{id} = \sqrt{\frac{2}{D}} \cos(\omega_d\trans z_i+b)\,,
\end{align*}
and $\expect\, R R\trans = K$. 

For the Gaussian kernel, $k(z-z')=\exp(-\|z-z'\|^2 / 2)$, we have $p(\omega) = (2\pi)^{-D/2}\exp(-\|\omega\|^2 / 2)$ \cite{rahimi2007random}. We are not aware of literature on RFF representation for the inverse multiquadratic (IMQ) kernel; we derive it below using standard methods.

\subsection{RFF for the IMQ kernel}
\begin{theorem}
    For the inverse multiquadratic (IMQ) kernel,
    \begin{align*}
        k(z, z') \equiv k(z-z') = \frac{c}{\sqrt{c^2 + \|z-z'\|^2}}\,,
    \end{align*}
    the distribution of random Fourier features $p(\omega)$ is proportional to the following (for $s = \|w\|$),
\begin{equation}
    p(\omega) \equiv \hat h(s) \propto \frac{K_{\frac{n-2}{2} + \frac{1}{2}}(cs)}{s^{\frac{n-2}{2} + \frac{1}{2}}} =  \frac{K_{\frac{n-1}{2}}(cs)}{s^{\frac{n-1}{2}}}\,,
    \label{app:eq:imq_rff_full_form}
\end{equation}
    where $K_{\nu}$ is the modified Bessel function (of the second kind) of order $\nu$.
\end{theorem}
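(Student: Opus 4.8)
The plan is to obtain $p(\omega)$ as the $n$-dimensional Fourier transform of the radial function $k(u)=c\,(c^2+\|u\|^2)^{-1/2}$, which by Bochner's theorem is a probability density because the IMQ kernel is positive definite (it is completely monotone in $\|u\|^2$). The only trick needed is \emph{subordination}: writing $a^{-1/2}=\pi^{-1/2}\int_0^\infty t^{-1/2}e^{-at}\,dt$ with $a=c^2+\|u\|^2$ expresses $k$ as a Gaussian mixture,
\[
    k(u)=\frac{c}{\sqrt\pi}\int_0^\infty t^{-1/2}e^{-tc^2}\,e^{-t\|u\|^2}\,dt ,
\]
so that its Fourier transform reduces to the (elementary) Fourier transform of a Gaussian.

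Carrying this out: interchange the two integrations (legitimate by Tonelli, all integrands being nonnegative), use $\int_{\RR^n}e^{-t\|u\|^2}e^{-i\omega\trans u}\,d^nu=(\pi/t)^{n/2}e^{-\|\omega\|^2/(4t)}$, and collect every factor independent of $\omega$ into the proportionality constant. With $s=\|\omega\|$ this leaves
\[
    p(\omega)\;\propto\;\int_0^\infty t^{-(n+1)/2}\,e^{-c^2 t-s^2/(4t)}\,dt .
\]
Now apply the standard Bessel integral $\int_0^\infty t^{\beta-1}e^{-pt-q/t}\,dt=2\,(q/p)^{\beta/2}K_\beta(2\sqrt{pq})$ with $\beta=-\tfrac{n-1}{2}$, $p=c^2$, $q=s^2/4$, so that $2\sqrt{pq}=cs$ and $(q/p)^{\beta/2}\propto s^{-(n-1)/2}$; using $K_{-\nu}=K_\nu$ gives exactly $p(\omega)\propto K_{(n-1)/2}(cs)/s^{(n-1)/2}$, i.e.\ \eqref{app:eq:imq_rff_full_form}, since $\tfrac{n-2}{2}+\tfrac12=\tfrac{n-1}{2}$. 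As an independent cross-check one can instead use the radial Fourier transform $\hat f(s)=(2\pi)^{n/2}s^{-(n-2)/2}\int_0^\infty f(r)J_{(n-2)/2}(sr)\,r^{n/2}\,dr$ together with the Hankel-transform tables (e.g.\ Gradshteyn--Ryzhik~6.565) applied to $f(r)=(c^2+r^2)^{-1/2}$, which yields the same order $K_{(n-1)/2}$ and the same power of $s$.

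The argument has no real analytic obstacle: the subordination identity and the interchange of integrals are routine, and positive definiteness of $k$ guarantees $p\ge 0$ and $\int p=1$ (the latter also follows from $k(0)=1$), so we never have to normalize. The one thing to be careful about is the \emph{bookkeeping of the half-integer order and the surviving power of $s$}: the order comes out as $\beta=-(n-1)/2$ and must be folded back to $(n-1)/2$ via $K_{-\nu}=K_\nu$, while the prefactor $(q/p)^{\beta/2}$ must be tracked so that precisely $s^{-(n-1)/2}$ remains. If one wanted the exact density rather than proportionality, one would additionally evaluate $\int_{\RR^n}p(\omega)\,d^n\omega$ in closed form (or invoke $k(0)=1$) to pin the constant; and for the sampling implementation it is worth recording that for odd $n$ the order $(n-1)/2$ is a half-integer, so $K_{(n-1)/2}$ is elementary, whereas for even $n$ it is a genuine Bessel function that must be handled numerically.
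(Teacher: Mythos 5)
Your argument is correct, and it takes a genuinely different route from the paper's. The paper writes the radial $n$-dimensional Fourier transform directly as a Hankel transform of order $(n-2)/2$ and reads off the answer from a Hankel-transform table (Piessens, Table~9.2), which immediately yields $K_{(n-1)/2}(cs)/s^{(n-1)/2}$. You instead begin from the subordination identity $a^{-1/2}=\pi^{-1/2}\int_0^\infty t^{-1/2}e^{-at}\,dt$, which represents the IMQ kernel as a scale mixture of Gaussians; the $n$-dimensional Fourier transform then reduces to the elementary Gaussian transform under the $t$-integral, leaving a one-dimensional integral that you recognize as the Bessel representation $\int_0^\infty t^{\beta-1}e^{-pt-q/t}\,dt=2(q/p)^{\beta/2}K_\beta(2\sqrt{pq})$. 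Plugging in $\beta=-(n-1)/2$, $p=c^2$, $q=s^2/4$ and using $K_{-\nu}=K_\nu$ recovers exactly the stated density, and your bookkeeping is right: $2\sqrt{pq}=cs$ and $(q/p)^{\beta/2}\propto s^{-(n-1)/2}$, so the surviving power of $s$ matches. Each route has its merits: yours is more self-contained, needing only the Gaussian Fourier transform plus one standard Bessel integral rather than a table of Hankel transforms, and the Gaussian-mixture representation makes the non-negativity of $p(\omega)$ manifest without invoking Bochner; the paper's Hankel-transform route is shorter if one accepts the table entry, and is essentially what you describe as your ``independent cross-check.''
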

\begin{proof}
To find the random Fourier features, we need to take the Fourier transform of this kernel,
\begin{align*}
    \hat k(\omega) = \int e^{-i \omega\trans z} k(z) d^nz\,.
\end{align*}

As the IMQ kernel is radially symmetric, meaning that $k(z, z') = h(r)$ for $r=\|z-z'\|$, its Fourier transform can be written in terms of the Hankel transform \cite[Section B.5]{grafakos2008classical}
(with $\|\omega\|=s$) 
\begin{align*}
    \hat k(\omega)=\hat h(s)=\frac{(2\pi)^{n/2}}{s^{\frac{n-2}{2}}}H_{\frac{n-2}{2}}\left[r^{\frac{n-2}{2}}h(r) \right](s)\,.
\end{align*}

The Hankel transform of order $\nu$ is defined as
\begin{align*}
    H_\nu[g(t)](s)=\int_0^{\infty}J_\nu(st) g(t)t\,dt\,,
\end{align*}
where $J_\nu(s)$ is the Bessel function (of the first kind) of order $\nu$.

As $h(r) = c / \sqrt{c^2+r^2}$,
\begin{align*}
    H_{\frac{n-2}{2}}\left[r^{\frac{n-2}{2}}h(r) \right](s) = c \frac{\sqrt{2}c^{\frac{n-2}{2} + 1/2}}{\sqrt{s}\Gamma(\frac{1}{2})} K_{\frac{n-2}{2} + \frac{1}{2}}(cs)\,,
\end{align*}
where $K_{\nu}$ is a modified Bessel function (of the second kind) of order $\nu$.

Therefore, by using a table of Hankel transforms \cite[Chapter 9, Table 9.2]{piessens2000hankel},
\begin{align*}
    \hat h(s) \propto \frac{K_{\frac{n-2}{2} + \frac{1}{2}}(cs)}{s^{\frac{n-2}{2} + \frac{1}{2}}} =  \frac{K_{\frac{n-1}{2}}(cs)}{s^{\frac{n-1}{2}}}\,.
\end{align*}    
\end{proof}

\subsubsection{How to sample}
To sample random vectors from \eqref{app:eq:imq_rff_full_form}, we can first sample their directions as uniformly distributed unit vectors $d / \|d\|$, and then their amplitudes $s$ from $\hat h(s) s^{n-1}$ (the multiplier comes from the change to spherical coordinates).

Sampling unit vectors is easy, as for $d\sim\mathcal{N}(0, I)$, $d / \|d\|$ is a uniformly distributed unit vector. 

To sample the amplitudes, we numerically evaluate 
\begin{equation}
    \tilde p(s) = \hat h(s) s^{n-1} = K_{\frac{n-1}{2}}(cs) s^{\frac{n-1}{2}}
    \label{app:eq:imq_rff_ampltudes}
\end{equation}
on a grid, normalize it to get a valid probability distribution, and sample from this approximation. As for large orders $K_{\nu}$ attains very large numbers, we use mpmath \cite{mpmath}, an arbitrary precision floating-point arithmetic library for Python. As we only need to sample $\tilde p(s)$ once during training, this adds a negligible computational overhead.

Finally, note that for any IMQ bias $c$, we can sample $s$ from \eqref{app:eq:imq_rff_ampltudes} for $c=1$, and then use $\tilde s = s / c$ to rescale the amplitudes. This is because 
\begin{align*}
    P(s/c \leq x) = P(s \leq cx) = C \int_0^{cx} K_{\frac{n-1}{2}}(t) t^{\frac{n-1}{2}} dt = C c^{\frac{n-1}{2}} \int_0^{x} K_{\frac{n-1}{2}}(c\tilde t) \tilde t^{\frac{n-1}{2}} cd\tilde t\,.
\end{align*}
In practice, we evaluate $\tilde p(s)$ for $c=1$ on a uniform grid over $[10^{-12}, 100]$ with $10^4$ points, and rescale for other $c$ (for output dimensions of more that 128, we use a larger grid; see \cref{app:sec:pseudo-code}).

\subsection{RFF for SSL-HSIC}
To apply RFF to SSL-HSIC, we will discuss the $\hsic(Z,Y)$ and the $\hsic(Z,Z)$ terms separately. We will use the following notation:
\begin{align*}
    k(z_i^p, z_j^l) \approx \sum_{d=1}^Dr_{d}^{ip} r_{d}^{jl}
\end{align*}
for $D$-dimensional RFF $r^{ip}$ and $r^{jl}$.

Starting with the first term, we can re-write \eqref{app:eq:hsic_z_y_biased} as
\begin{align*}
    \widehat\hsic(Z, Y)_{\mathrm{RFF}}\,&= \frac{1}{BM(M-1)}\sum_{ipld} r_{d}^{ip} r^{il}_d- \frac{1}{B^2M^2}\sum_{ijpld} r_d^{ip} r_d^{jl} - \frac{1}{M-1}\\
    &=\frac{1}{BM(M-1)}\sum_{id} \brackets{\sum_p r_{d}^{ip}}^2 - \frac{1}{B^2M^2}\sum_d \brackets{\sum_{ip} r_d^{ip}}^2 - \frac{1}{M-1}\,.
\end{align*}
The last term in the equation above is why we use RFF: instead of computing $\sum_{ijpl}$ in $O(B^2M^2)$ operations, we compute $\sum_{ip}$ in $O(BM)$ and then sum over $d$, resulting in $O(BMD)$ operations (as we use large batches, typically $BM > D$). As $\hsic(Z,Y)$ is linear in $k$, $\expect_{\omega, b}\, \widehat\hsic(Z, Y)_{\mathrm{RFF}} = \widehat\hsic(Z, Y)$.

To estimate $\widehat\hsic(Z, Z)$, we need to sample RFF twice. This is because 
\begin{align*}
    \widehat\hsic(Z, Z) = \frac{1}{(BM-1)^2}\tr \brackets{KHKH}\,,
\end{align*}
therefore we need the first $K$ to be approximated by $RR\trans$, and the second -- by an independently sampled $\tilde R\tilde R\trans$. This way, we will have $\expect_{\omega, b, \tilde{\omega}, \tilde b}\, \widehat\hsic(Z, Z)_{\mathrm{RFF}} = \widehat\hsic(Z, Z)$.

Therefore, we have (noting that $HH=H$)
\begin{align*}
    \widehat\hsic(Z, Z)_{\mathrm{RFF}} \,&= \frac{1}{(BM-1)^2}\tr \brackets{RR\trans H\tilde R \tilde R\trans H} = \frac{1}{(BM-1)^2}\|R\trans H\tilde R\|^2_F\\
    & = \frac{1}{(BM-1)^2}\|R\trans HH\tilde R\|^2_F \\
    &=\frac{1}{(BM-1)^2}\sum_{d_1, d_2} \brackets{\sum_{ip} \brackets{r_{d_1}^{ip}- \frac{1}{BM}\sum_{jl} r_{d_1}^{jl}} \brackets{r_{d_2}^{ip} - \frac{1}{BM}\sum_{jl} r_{d_2}^{jl}}}^2\,.
\end{align*}

To summarize the computational complexity of this approach, computing $D$ random Fourier features for a $Q$-dimensional $z$ takes $O(DQ)$ operations (sampling $D\times K$ Gaussian vector, normalizing it, sampling $D$ amplitudes, computing $\omega_d\trans z$ $D$ times), therefore $O(BMDQ)$ for $BM$ points. After that, computing $\hsic(Z, Y)$ takes $O(BMD)$ operations, and $\hsic(Z, Z)$ -- $O(BMD^2)$ operations. The resulting complexity per batch is $O(BMD(Q + D))$. Note that we sample new features every batch.

In contrast, computing SSL-HSIC directly would cost $O(Q)$ operations per entry of $K$, resulting in $O((BM)^2Q)$ operations. Computing HSIC would then be quadratic in batch size, and the total complexity would stay $O((BM)^2Q)$. 

In the majority of experiments, $B=4096$, $M=2$, $Q=128$ and $D=512$, and the RFF approximation performs faster (with little change in accuracy; see \cref{tab:ablation-rff}).

\section{Experiment Details}
\label{app:section:experiments}

\subsection{ImageNet Pretraining}
\label{app:experiment-pretrain}

\subsubsection{Data augmentation}
We follow the same data augmentation scheme as BYOL \cite{grill2020bootstrap} with exactly the same parameters. For completeness, we list the augmentations applied and parameters used:
\begin{itemize}
    \item random cropping: randomly sample an area of $8\%$ to $100\%$ of the original image with an aspect ratio logarithmically sampled from $3/4$ to $4/3$. The cropped image is resized to $224 \times 224$ with bicubic interpolation;
    \item flip: optionally flip the image with a probability of $0.5$;
    \item color jittering: adjusting brightness, contrast, saturation and hue in a random order with probabilities $0.8$, $0.4$, $0.4$, $0.2$ and $0.1$ respectively;
    \item color dropping: optionally converting to grayscale with a probability of $0.2$;
    \item Gaussian blurring: Gaussian kernel of size $23 \times 23$ with a standard deviation uniformly sampled over $[0.1, 2.0]$;
    \item solarization: optionally apply color transformation $x \mapsto x \cdot 1_{x < 0.5} + (1 - x) \cdot 1_{x \geq 0.5}$ for pixels with values in $[0, 1]$. Solarization is only applied for the second view, with a probability of $0.2$.
\end{itemize}

\subsubsection{Optimizing kernel parameters}
\label{app:kernel-param}
Since we use radial basis function kernels, we can express the kernel $k(s)$ in term of the distance $s= \lVert z_i-z_j \rVert^2$. The entropy of the kernel distance $k_{\sigma}(s_{ij})$ can be expressed as follows:
\begin{align*}
    H[k] & = -\int p(k) \log p(k) dk \\
         & = -\int q(s) \log \left( q(s) \left\lvert \frac{ds}{dk} \right\rvert \right) ds \\
         & = H[s] + \int q(s) \log \left\lvert \frac{dk}{ds} \right\rvert ds \\
         & = \E\left[ \log \lvert k_{\sigma}'(s)\rvert \right] + \text{const} \\
         & \propto \E\left[ \log \lvert k_{\sigma}'(s) \rvert^2 \right] + \text{const}
.\end{align*}

We use the kernel distance entropy to automatically tune kernel parameters: for the kernel parameter $\sigma$, we update it to maximize $\E\left[ \log \lvert k_{\sigma}' \rvert^2 \right]$ (for IMQ, we optimize the bias $c$) at every batch. This procedure makes sure the kernel remains sensitive to data variations as representations move closer to each other.

\subsection{Evaluations}
\label{app:experiment-eval}
\subsubsection{ImageNet linear evaluation protocol}
After pretraining with SSL-HSIC, we retain the encoder weights and train a linear layer on top of the frozen representation. The original ImageNet training set is split into a training set and a local validation set with $10000$ data points. We train the linear layer on the training set. Spatial augmentations are applied during training, i.e., random crops with resizing to $224 \times 224$ pixels, and random flips. For validation, images are resized to $256$ pixels along the shorter side using bicubic resampling, after which a $224 \times 224$ center crop is applied. We use \texttt{SGD} with Nesterov momentum and train over 90 epochs, with a batch size of 4096 and a momentum of 0.9.  We sweep over learning rate and weight decay and choose the hyperparameter with top-1 accuracy on local validation set. With the best hyperparameter setting, we report the final performance on the original ImageNet validation set. 

\subsubsection{ImageNet semi-supervised learning protocol}
We use ImageNet 1\% and 10\% datasets as SimCLR \cite{chen2020simple}. During training, we initialize the weights to the pretrained weights, then fine-tune them on the ImageNet subsets. We use the same training procedure for augmentation and optimization as the linear evaluation protocol.

\subsubsection{Linear evaluation protocol for other classification datasets}
\label{app:semi-supervised_protocol}
We use the same dataset splits and follow the same procedure as BYOL \cite{grill2020bootstrap} to evaluate classification performance on other datasets, i.e. 12 natural image datasets and Pascal VOC 2007. The frozen features are extracted from the frozen encoder. We learn a linear layer using logistic regression in \texttt{sklearn} with l2 penalty and \texttt{LBFGS} for optimization. We use the same local validation set as BYOL \cite{grill2020bootstrap} and tune hyperparameter on this local validation set. Then, we train on the full training set using the chosen weight of the l2 penalty and report the final result on the test set.

\subsubsection{Fine-tuning protocol for other classification datasets}
Using the same dataset splits described in \Cref{app:semi-supervised_protocol}, we initialize the weights of the network to the pretrained weights and fine-tune on various classification tasks. The network is trained using \texttt{SGD} with Nestrov momentum for $20000$ steps. 
The momentum parameter for the batch normalization statistics is set to $\max(1 -10/s, 0.9)$ where $s$ is the number of steps per epoch. 
We sweep the weight decay and learning rate, and choose hyperparameters that give the best score on the local validation set. Then we use the selected weight decay and learning rate to train on the whole training set to report the test set performance.

\subsubsection{Transfer to semantic segmentation}
In semantic segmentation, the goal is to classify each pixel. The head architecture is a fully-convolutional network (FCN)-based \cite{DBLP:journals/corr/LongSD14} architecture as \cite{DBLP:journals/corr/abs-1911-05722,grill2020bootstrap}. We train on the \texttt{train\_aug2012} set and report results on \texttt{val2012}. Hyperparameters are selected on a $2119$ images, which is the same held-out validation set as \cite{grill2020bootstrap}. A standard per-pixel softmax cross-entropy loss is used to train the FCN. Training uses random scaling (by a ratio in $[0.5, 2.0]$), cropping (crop size 513), and horizontal flipping for data augmentation. Testing is performed on the $[513, 513]$ central crop. We train for $30000$ steps with a batch size of $16$ and weight decay $10^{-4}$. We sweep the base learning rate with local validation set. We use the best learning rate to train on the whole training set and report on the test set. During training, the learning rate is multiplied by $0.1$ at the $70th$ and $90th$ percentile of training. The final result is reported with the average of 5 seeds.

\subsubsection{Transfer to depth estimation}
The network is trained to predict the depth map of a given scene. We use the same setup as BYOL \cite{grill2020bootstrap} and report it here for completeness. The architecture is composed of a ResNet-50 backbone and a task head which takes the $conv5$ features into 4 upsampling blocks with respective filter sizes 512, 256, 128, and 64. Reverse Huber loss function is used for training.
The frames are down-sampled from $[640, 480]$ by a factor 0.5 and center-cropped to size $[304, 228]$. Images are randomly flipped and color transformations are applied: greyscale with a probability of 0.3; brightness adjustment with a maximum difference of 0.1255; saturation with a saturation factor randomly picked in the interval $[0.5, 1.5]$; hue adjustment with a factor randomly picked in the interval $[-0.2, 0.2]$.
We train for 7500 steps with batch size 256, weight decay 0.001, and learning rate 0.05.

\subsubsection{Transfer to object detection}
We follow the same setup for evaluating COCO object detection tasks as in DetCon \cite{DBLP:journals/corr/abs-2103-10957}. The architecture used is a Mask-RCNN \cite{DBLP:journals/corr/HeGDG17} with feature pyramid networks \cite{DBLP:journals/corr/LinDGHHB16}. During training, the images are randomly flipped and resized to $(1024\cdot s)\times (1024\cdot s)$ %
where $s \in [0.8, 1.25]$. Then the resized image is cropped or padded to a $1024\times1024$. We fine-tune the model for 12 epochs ($1\times$ schedule \cite{DBLP:journals/corr/abs-1911-05722}) with \texttt{SGD} with momentum with a learning rate of 0.3 and momumtem 0.9. The learning rate increases linearly for the first 500 iterations and drops twice by a factor of 10, after $2/3$ and
$8/9$ of the total training time. We apply a weight decay of $4\times 10^{-5}$ and train with a batch size of 64.

\section{SSL-HSIC pseudo-code}
\label{app:sec:pseudo-code}

\begin{minted}[breaklines=true,fontsize=\scriptsize]{python}
import jax
import jax.numpy as jnp
import mpmath
import numpy as np

def ssl_hsic_loss(hiddens, kernel_param, num_rff_features, gamma, rng):
  """Compute SSL-HSIC loss."""
  hsic_yz = compute_hsic_yz(hiddens, num_rff_features, kernel_param, rng)
  hsic_zz = compute_hsic_zz(hiddens, num_rff_features, kernel_param, rng)
  return - hsic_yz + gamma * jnp.sqrt(hsic_zz)

def compute_hsic_yz(hiddens, num_rff_features, kernel_param, rng):
  """Compute RFF approximation of HSIC_YZ."""
  # B - batch size; M - Number of transformations.
  B = hiddens[0].shape[0]
  M = len(hiddens)
  
  rff_hiddens = jnp.zeros((B, num_rff_features))
  mean = jnp.zeros((1, num_rff_features))
  for hidden in hiddens:
    rff_features = imq_rff_features(hidden, num_rff_features, kernel_param, rng)
    rff_hiddens += rff_features
    mean += rff_features.sum(0, keepdims=True)
  return (rff_hiddens ** 2).sum() / (B * K * (K - 1)) - (mean ** 2).sum() / (B * M) ** 2

def compute_hsic_zz(hiddens, num_rff_features, kernel_param, rng):
  """Compute RFF approximation of HSIC_ZZ."""
  rng_1, rng_2 = jax.random.split(rng, num=2)
  B = hiddens[0].shape[0]
  M = len(hiddens)
  
  z1_rffs = []
  z2_rffs = []
  center_z1 = jnp.zeros((1, num_rff_features))
  center_z2 = jnp.zeros((1, num_rff_features))
  for hidden in hiddens:
    z1_rff = imq_rff_features(hidden, num_rff_features, kernel_param, rng_1)
    z2_rff = imq_rff_features(hidden, num_rff_features, kernel_param, rng_2)
    z1_rffs.append(z1_rff)
    center_z1 += z1_rff.mean(0, keepdims=True)
    z2_rffs.append(z2_rff)
    center_z2 += z2_rff.mean(0, keepdims=True)
  center_z1 /= M
  center_z2 /= M

  z = jnp.zeros(shape=(num_rff_features, num_rff_features), dtype=jnp.float32)
  for z1_rff, z2_rff in zip(z1_rffs, z2_rffs):
    z += jnp.einsum('ni,nj->ij', z1_rff - center_z1, z2_rff - center_z2)
  return (z ** 2).sum() / (B * M - 1) ** 2
  
def imq_rff_features(hidden, num_rff_features, kernel_param, rng):
  """Random Fourier features of IMQ kernel."""
  d = hidden.shape[-1]
  rng1, rng2 = jax.random.split(rng)
  amp, amp_probs = amplitude_frequency_and_probs(d)
  amplitudes = jax.random.choice(rng1, amp, shape=[num_rff_features, 1], p=amp_probs)
  directions = jax.random.normal(rng2, shape=(num_rff_features, d))
  b = jax.random.uniform(rng2, shape=(1, num_features)) * 2 * jnp.pi
  w = directions / jnp.linalg.norm(directions, axis=-1, keepdims=True) * amplitudes
  z = jnp.sqrt(2 / num_rff_features) * jnp.cos(jnp.matmul(hidden / kernel_param, w.T) + b)
  return z

def amplitude_frequency_and_probs(d):
  """Returns frequencies and probabilities of amplitude for RFF of IMQ kernel."""
  # Heuristics for increasing the upper limit with the feature dimension.
  if d >= 4096:
    upper = 200
  elif d >= 2048:
    upper = 150
  elif d >= 1024:
    upper = 120
  else:
    upper = 100
  x = np.linspace(1e-12, upper, 10000)
  p = compute_prob(d, x)
  return x, p
  
def compute_prob(d, x_range):
  """Returns probabilities associated with the frequencies."""
  prob = list()
  for x in x_range:
    prob.append(mpmath.besselk((d - 1) / 2, x) * mpmath.power(x, (d - 1) / 2))
  normalizer = prob[0]
  for x in prob[1:]:
    normalizer += x
  normalized_prob = []
  for x in prob:
    normalized_prob.append(float(x / normalizer))
  return np.array(normalized_prob)

\end{minted}

\end{document}